\newcommand{\inner}[2]{\left\langle #1, #2 \right\rangle}
\newcommand{\E}{\ensuremath{\mathbb{E}}}
\newcommand{\norm}[1]{\left\lVert{#1}\right\rVert}
\newcommand{\abs}[1]{\left\lvert{#1}\right\rvert}
\newcommand{\R}{\mathbb{R}}
\mathchardef\hyphen="2D
\newtheorem{thm}{Theorem}
\newtheorem{lem}{Lemma}
\newcommand{\calH}{\mathcal{H}}
\newcommand{\calK}{\mathcal{K}}
\newcommand{\calM}{\mathcal{M}}
\newcommand{\calX}{\mathcal{X}}
\newcommand{\mathR}{\mathbb{R}}
\newcommand{\vecw}{\mathbf{w}}
\newcommand{\vecx}{\mathbf{x}}
\newcommand{\vecz}{\mathbf{z}}
\newcommand{\removed}[1]{}
\newcommand{\natinote}[1]{{\Large [[ {\color{red} {#1} -- Nati} ]]}}
\newcommand{\vecW}{\mathbf{w}}
\newcommand{\margin}{\text{margin}}
\newcommand{\hatl}{\widehat{L}}
\newcommand{\eps}{\boldsymbol{\nu}}
\newcommand{\err}{Err}
\newcommand{\error}{\epsilon}
\newcommand{\hatD}{\widehat{D}}
\newcommand{\figdir}{.}
\title{Exploring Generalization in Deep Learning}
\author{
  Behnam Neyshabur,\;Srinadh Bhojanapalli,\;David McAllester,\;Nathan Srebro\\
  Toyota Technological Institute at Chicago\\
  \texttt{\{bneyshabur, srinadh, mcallester, nati\}@ttic.edu} \\
}
\begin{document}

\maketitle

\begin{abstract}
With a goal of understanding what drives generalization in deep networks, we consider several recently suggested explanations, including norm-based control, sharpness and robustness. We study how these measures can ensure generalization, highlighting the importance of scale normalization, and making a connection between sharpness and PAC-Bayes theory.  We then investigate how well the measures explain different observed phenomena.
\end{abstract}

\section{Introduction}
Learning with deep neural networks has enjoyed huge empirical success in recent years across a wide variety of tasks. Despite being a complex, non-convex optimization problem, simple methods such as stochastic gradient descent (SGD) are able to recover good solutions that minimize the training error. More surprisingly, the networks learned this way exhibit good generalization behavior, even when the number of parameters is significantly larger than the amount of training data \cite{neyshabur15b,zhang2017understanding}.

In such an over parametrized setting, the objective has multiple global minima, all minimize the training error, but many of them do not generalize well.  
Hence, just minimizing the training error is not sufficient for learning: picking the wrong global minima can lead to bad generalization behavior. In such situations, generalization behavior depends implicitly on the algorithm used to minimize the training error. Different algorithmic choices for optimization such as the initialization, update rules, learning rate, and stopping condition, will lead to different global minima with different generalization behavior \cite{chaudhari2016entropy, keskar2016large, NeySalSre15}. For example, \citet{NeySalSre15} introduced Path-SGD, an optimization algorithm that is invariant to rescaling of weights, and showed better generalization behavior over SGD for both feedforward and recurrent neural networks \cite{NeySalSre15,neyshabur2016path}. \citet{keskar2016large} noticed that the solutions found by stochastic gradient descent with large batch sizes generalizes worse than the one found with smaller batch sizes, and \citet{hardt2015train} discuss how stochastic gradient descent ensures uniform stability, thereby helping generalization for convex objectives.


What is the bias introduced by these algorithmic choices for neural
networks?  What ensures generalization in neural networks?  What is
the relevant notion of complexity or capacity control?

As mentioned above, simply accounting for complexity in terms of the
number of parameters, or any measure which is uniform across all
functions representable by a given architecture, is not sufficient to
explain the generalization ability of neural networks trained in
practice.  For linear models, norms and margin-based measures, and not
the number of parameters, are commonly used for capacity control~
\cite{bartlett2002rademacher, evgeniou2000regularization,
  smola1998connection}. Also norms such as the trace norm and max norm
are considered as sensible inductive biases in matrix factorization
and are often more appropriate than parameter-counting measures such
as the rank~\cite{srebro2005rank,srebro2005maximum}. In a similar
spirit, \citet{bartlett1998sample} and later \citet{neyshabur15b}
suggested different norms of network parameters to measure the
capacity of neural networks.  In a different line of work,
\citet{keskar2016large} suggested ``sharpness'' (robustness of the
training error to perturbations in the parameters) as a complexity
measure for neural networks.  Others, including
\citet{langford2001not} and more recently
\citet{dziugaite2017computing}, propose a PAC-Bayes analysis.

What makes a complexity measure appropriate for explaining
generalization in deep learning?  First, an appropriate complexity
measure must be sufficient in ensuring generalization.  Second,
networks learned in practice should be of low complexity under this
measure.  This can happen if our optimization algorithms bias us
toward lower complexity models under this measure {\em and} it is
possible to capture real data using networks of low complexity.  In
particular, the complexity measure should help explain several
recently observed empirical phenomena that are not explained by a
uniform notion of complexity:
\begin{itemize}
\item It is possible to obtain zero training error on random labels
  using the same architecture for which training with real labels
  leads to good generalization~\cite{zhang2017understanding}.  We
  would expect the networks learned using real labels (and which
  generalizes well) to have much lower complexity, under the suggested
  measure, than those learned using random labels (and which obviously
  do not generalize well).
\item Increasing the number of hidden units, thereby increasing the
  number of parameters, can lead to a decrease in generalization error
  even when the training error does not decrease~\cite{neyshabur15b}.
  We would expect to see the complexity measure decrease as we
  increase the number of hidden units.
\item When training the same architecture, with the same training set,
  using two different optimization methods (or different algorithmic
  or parameter choices), one method results in better generalization
  even though both lead to zero training
  error~\cite{NeySalSre15,keskar2016large}.  We would expect to see a
  correlation between the complexity measure and generalization
  ability among zero-training error models.
\end{itemize}
In this paper we examine different complexity measures that have
recently been suggested, or could be considered, in explaining
generalization in deep learning.  In light of the above, we evaluate
the measures based on their ability to theoretically guarantee
generalization, and their empirical ability to explain the above
phenomena.  Studying how each measure can guarantee generalization also
let us better understand how it should be computed and compared when
trying to explain the empirical phenomena.

We investigate complexity measures including norms, robustness and
sharpness of the network.  We emphasize in our theoretical and
empirical study the importance of relating the scale of the parameters
and the scale of the output of the network, e.g.~by relating norm and
margin.  In this light, we discuss how sharpness by itself is not
sufficient for ensuring generalization, but can be combined, through
PAC-Bayes analysis, with the norm of the weights to obtain an
appropriate complexity measure.  The role of sharpness in
PAC-Bayesian analysis of neural networks was also recently noted by
\citet{dziugaite2017computing}, who used numerical techniques to
numerically optimize the overall PAC-Bayes bound---here we emphasize
the distinct role of sharpness as a balance for norm.

\removed{This connection between sharpness and the PAC-Bayesian framework was also noticed in a prior work by \citet{dziugaite2017computing}, who optimize
the PAC-Bayes generalization bound over a family of multivariate
Gaussian distributions, extending the work of \citet{langford2001not}.
They show that the optimized PAC-Bayes bounds are numerically
non-vacuous for feedforward networks trained on a binary classification
variant of MNIST dataset.}

In order to further understand the
significance of sharpness in deep learning, and how its relationship
to margin deviates for that found in linear models, we also establish,
in Section \ref{sec:pac_bayes}, sufficient conditions on the network
that {\em provably} ensures small sharpness.

\subsection*{Notation}
Let $f_\vecw(\vecx)$ be the function computed by a $d$ layer
feed-forward network with parameters $\vecw$ and Rectified Linear Unit
(ReLU) activations, $f_\vecw(\vecx) =
W_d\,\phi(W_{d-1}\,\phi(....\phi(W_1 \vecx ) ) )$ where
$\phi(z)=\max\{0,z\}$. For a given $\vecx \in \R^n$, let
$D^{\vecx,\vecw}_i$ denote the diagonal $\{1,0\}$ matrix corresponding
to activation in layer $i$. To simplify the presentation we drop the
$\vecx$ superscript and use $D_i$ instead. We can therefore write
$f_\vecw(\vecx) = W_d\,D_{d-1}\,W_{d-1}\,\cdots \,D_1\,W_1\,\vecx =
W_d\left(\Pi_{i=1}^{d-1} D_{i}W_i \right)\vecx$ where we drop the
${\vecx,\vecw}$ superscript from $D_i^{\vecx,\vecw}$ and use $D_i$
instead but remember that $D_i$ depends on $\vecx$ and the parameters $W_j$ for any $j\leq i$.

\removed{
\footnote{Convolutional networks can also be expressed in a similar way but the notation becomes more involved due to max-pooling layers and weight sharing.}
}

Let $h_i$ be the number of nodes in layer $i$, with $h_0
=n$. Therefore, for any layer $i$, we have $W_i\in R^{h_{i}\times
  h_{i-1}}$. Given any input $x$, the loss of the prediction by the
function $f_\vecw$ is then given by $\ell(\vecw,\vecx)$. We also
denote by $L(\vecW)$ the expected loss and by $\hatl(\vecW)$ the
empirical loss over the training set. For any integer $k$, $[k]$
denotes the set $\{ 1, 2, \cdots, k\}$. Finally, $\norm{.}_F$,
$\norm{.}_2$, $\norm{.}_1$, $\norm{.}_\infty$ denote Frobenius norm,
the spectral norm, element-wise $\ell_1$-norm and element-wise
$\ell_\infty$ norm respectively.

\section{Generalization and Capacity Control in Deep Learning}\label{sec:summary}
In this section, we discuss complexity measures that have been
suggested, or could be used for capacity control in neural networks.
We discuss advantages and weaknesses of each of these complexity
measures and examine their abilities to explain the observed
generalization phenomena in deep learning.

We consider the statistical {\em capacity} of a model class in terms
of the number of examples required to ensure {\em generalization},
i.e.~that the population (or test error) is close to the training
error, even when minimizing the training error.  This also roughly
corresponds to the maximum number of examples on which one can obtain
small training error even with random labels.

Given a model class $\calH$, such as all the functions representable by
some feedforward or convolutional networks, one can consider the
capacity of the entire class $\calH$---this corresponds to learning
with a uniform ``prior'' or notion of complexity over all models in
the class.  Alternatively, we can also consider some {\em complexity
  measure}, which we take as a mapping that assigns a non-negative
number to every hypothesis in the class - $\calM: \{ \calH, S \}
\rightarrow \mathR^+$, where $S$ is the training set.  It is then sufficient to consider the capacity
of the restricted class $\calH_{\calM,\alpha}=\{h: h\in \calH,
\calM(h) \leq \alpha\}$ for a given $\alpha \geq 0$.  One can then
ensure generalization of a learned hypothesis $h$ in terms of the
capacity of $\calH_{\calM,\calM(h)}$.  Having a good hypothesis with
low complexity, and being biased toward low complexity (in terms of
$\calM$) can then be sufficient for learning, even if the capacity of
the entire $\calH$ is high.  And if we are indeed relying on $\calM$
for ensuring generalization (and in particular, biasing toward models
with lower complexity under $\calM$), we would expect a learned $h$
with lower value of $\calM(h)$ to generalize better.

For some of the measures discussed, we allow $\calM$ to depend also on
the training set.  If this is done carefully, we can still ensure
generalization for the restricted class $\calH_{\calM,\alpha}$.

We will consider several possible complexity measures.  For each
candidate measure, we first investigate whether it is sufficient for
generalization, and analyze the capacity of $\calH_{\calM,\alpha}$.
Understanding the capacity corresponding to different complexity
measures also allows us to relate between different measures and
provides guidance as to what and how we should measure: From the above
discussion, it is clear that any monotone transformation of a
complexity measures leads to an equivalent notion of complexity.
Furthermore, complexity is meaningful only in the context of a
specific hypothesis class $\calH$, e.g.~specific architecture or
network size.  The capacity, as we consider it (in units of sample
complexity), provides a yardstick by which to measure complexity (we
should be clear though, that we are vague regarding the scaling of the
generalization error itself, and only consider the scaling in terms of
complexity and model class, thus we obtain only a very crude yardstick
sufficient for investigating trends and relative phenomena, not a
quantitative yardstick).

\subsection{Network Size}
For any model, if its parameters have finite precision, its capacity
is linear in the total number of parameters. Even without making an
assumption on the precision of parameters, the VC dimension of
feedforward networks can be bounded in terms of the number of
parameters 
$\text{dim}(\vecW)$\cite{anthony2009neural,bartlett1998sample,bartlett1998almost,
  shalev2014understanding}. In particular, \citet{bartlet2017} and
\citet{harvey2017nearly}, following \citet{bartlett1998almost}, give the following tight
(up to logarithmic factors) bound on the VC
dimension and hence capacity of feedforward networks with ReLU activations:
\begin{equation}
\text{VC-dim} = \tilde{O}(d * \text{dim}(\vecW))
\end{equation}
In the over-parametrized settings, where the number of parameters is
more than the number of samples, complexity measures that depend on
the total number of parameters are too weak and cannot explain the
generalization behavior.  Neural networks used in practice often have
significantly more parameters than samples, and indeed can perfectly
fit even random labels, obviously without
generalizing~\cite{zhang2017understanding}.  Moreover, measuring
complexity in terms of number of parameters cannot explain the
reduction in generalization error as the number of hidden units
increase \cite{neyshabur15b} (see also Figure
Figure~\ref{fig:hidden}).



\subsection{Norms and Margins}\label{sec:margin}

Capacity of linear predictors can be controlled independent of the
number of parameters, e.g. through regularization of its $\ell_2$ norm.
Similar norm based complexity measures have also been established
for feedforward neural networks with ReLU activations. For
example, capacity can be bounded based on the $\ell_1$ norm of the
weights of hidden units in each layer, and is proportional to
$\prod_{i=1}^d \norm{W_i}^2_{1,\infty}$, where $\norm{W_i}_{1,\infty}$
is the maximum over hidden units in layer $i$ of the $\ell_1$ norm of
incoming weights to the hidden unit \cite{bartlett2002rademacher}.
More generally \citet{NeyTomSre15} considered group norms $\ell_{p,
  q}$ corresponding to $\ell_q$ norm over hidden units of $\ell_p$
norm of incoming weights to the hidden unit. This includes
$\ell_{2,2}$ which is equivalent to the Frobenius norm where the
capacity of the network is proportional to $\prod_{i=1}^d
\norm{W_i}^2_{F}$. They further motivated a complexity measure that is
invariant to node-wise rescaling reparametrization
\footnote{Node-rescaling can be defined as a sequence of
  reparametrizations, each of which corresponds to multiplying
  incoming weights and dividing outgoing weights of a hidden unit by a
  positive scalar $\alpha$. The resulting network computes the same
  function as the network before the reparametrization.}, suggesting
$\ell_p$ path norms which is the minimum over all node-wise rescalings
of $\prod_{i=1}^d \norm{W_i}_{p,\infty}$ and is equal to $\ell_p$ norm
of a vector with coordinates each of which is the product of weights
along a path from an input node to an output node in the network.

\removed{
Capacity can also be bounded in terms of $\ell_1$ and $\ell_2$ path norms where $\ell_p$ path-norm is defined as $\ell_p$ norm over all paths in the network of product of weights along the path~\cite{NeyTomSre15}.}

Capacity control in terms of norm, when using a zero/one loss
(i.e.~counting errors) requires us in addition to account for scaling
of the output of the neural networks, as the loss is insensitive to
this scaling but the norm only makes sense in the context of such
scaling.  For example, dividing all the weights by the same number will
scale down the output of the network but does not change the $0/1$
loss, and hence it is possible to get a network with arbitrary small
norm and the same $0/1$ loss.  Using a scale sensitive losses, such as the
cross entropy loss, does address this issue (if the outputs are scaled
down toward zero, the loss becomes trivially bad), and one can obtain
generalization guarantees in terms of norm and the cross entropy loss.

However, we should be careful when comparing the norms of different
models learned by minimizing the cross entropy loss, in particular
when the training error goes to zero.  When the training error goes to
zero, in order to push the cross entropy loss (or any other positive
loss that diminish at infinity) to zero, the outputs of the network
must go to infinity, and thus the norm of the weights (under any
norm) should also go to infinity.  This means that minimizing the
cross entropy loss will drive the norm toward infinity.  In practice,
the search is terminated at some finite time, resulting in large, but
finite norm.  But the value of this norm is mostly an indication of
how far the optimization is allowed to progress---using a stricter
stopping criteria (or higher allowed number of iterations) would yield
higher norm.  In particular, comparing the norms of models found using
different optimization approaches is meaningless, as they would all go
toward infinity.

Instead, to meaningfully compare norms of the network, we should
explicitly take into account the scaling of the outputs of the
network. One way this can be done, when the training error is indeed
zero, is to consider the ``margin'' of the predictions in addition to
the norms of the parameters.  We refer to the margin for a single data
point $x$ as the difference between the score of the correct label and
the maximum score of other labels, i.e.
\begin{equation}
  \label{eq:margin}
 f_\vecw(\vecx)[y_\text{true}] -
\max_{y\neq y_\text{true}} f_\vecw(\vecx)[y] 
\end{equation}
In order to measure scale over an entire training set, one simple
approach is to consider the ``hard margin'', which is the minimum
margin among all training points.  However, this definition is very
sensitive to extreme points as well as to the size of the training
set.  We consider instead a more robust notion that allows a small
portion of data points to violate the margin. For a given training set
and small value $\epsilon>0$, we define the margin $\gamma_\margin$ as
the lowest value of $\gamma$ such that $\lceil \epsilon m \rceil$ data
point have margin lower than $\gamma$ where $m$ is the size of the
training set.  We found empirically that the qualitative and relative
nature of our empirical results is almost unaffected by reasonable
choices of $\epsilon$ (e.g.~between $0.001$ and $0.1$).

\removed{
Equivalently, margin for a given point can be viewed as the amount of perturbation to the output that the predictor can tolerate without changing $0/1$ loss of the prediction~\footnote{The margin value for the former definition is actually twice the margin in the latter but since every margin value will be multiplied by 2, these definitions are equivalent.}. For a given set, margin can be seen as the maximum perturbation of the output scores of the predictor that only changes the loss by at most $\error$:
\begin{equation}
\gamma_{\text{out}} = \max_{\gamma \geq 0} \enskip\gamma\qquad\text{s.t.}\qquad  \error \geq \max_{\norm{\eps}_\infty \leq \gamma}\frac{1}{m} \sum_{i=1}^m\ell( f_\vecw(x_i)+\eps_i, y_i) - \ell( f_\vecw(x_i), y_i)
\end{equation}

where $\eps$ is the perturbation matrix whose columns are perturbations on the output of the network for each data point and $\|.\|_\infty$ is simply the maximum element of this matrix.}

\removed{We will combine the margin with norm/measure of the parameters to ensure that the resulting complexity measure is invariant to the scaling of the output. To build a scale invariant measure, we need the norm/measure to change at the same rate as the output of the function $f_\vecw(x)$. Given such a measure $\norm{.}_M$, we can consider $\frac{\norm{.}_M}{\gamma_{\margin}}$, which is scale invariant, as a complexity measure for the feedforward networks with ReLU activations. }

\removed{
{\color{red} The paragraph below is not clear}

Another way to understand the reason for rescaling by the margin is through the relationship between losses. Suppose we use $\gamma_\margin$ as threshold for prediction, i.e. if Scaling the network output by $\frac{1}{\gamma_\margin}$ results in unit margin and in that case a margin based loss such as hinge loss can upper bound 0/1 loss. Hence the bound on hinge loss can be applied to get a bound on 0/1 loss. Therefore, we need to scale the network output by $\frac{1}{\gamma_\margin}$. This property holds for the measures explained above since scaling up the weights by a constant factor $c$ will scale up both margin and norm by factor $c^d$. 
}

The norm-based measures we investigate in this work and their
corresponding capacity bounds are as follows \footnote{We have dropped the term that only depend on the norm of the input. The bounds based on $\ell_2$-path norm and spectral norm can be derived directly from the those based on $\ell_1$-path norm and $\ell_2$ norm respectively. Without further conditions on weights, exponential dependence on depth is tight but the $4^d$ dependence might be loose~\cite{NeyTomSre15}. We will also discuss a rather loose bound on the capacity based on the spectral norm in Section \ref{subsec:lipschitz}.}:

\begin{itemize}
\item $\ell_2$ norm with capacity proportional to $\frac{1}{\gamma_{\margin}^2}\prod_{i=1}^d 4\norm{W_i}^2_F$~\cite{NeyTomSre15}.
\item $\ell_1$-path norm with capacity proportional to $\frac{1}{\gamma_{\margin}^2}\left(\sum_{j \in \prod_{k=0}^d[h_k]}\abs{\prod_{i=1}^d 2W_i[j_i,j_{i-1}]}\right)^2$\cite{bartlett2002rademacher,NeyTomSre15}.
\item $\ell_2$-path norm with capacity proportional to $\frac{1}{\gamma_{\margin}^2}\sum_{j \in \prod_{k=0}^d[h_k]}\prod_{i=1}^d 4h_iW_i^2[j_i,j_{i-1}]$.
\item spectral norm with capacity proportional to $\frac{1}{\gamma_{\margin}^2}\prod_{i=1}^d h_i\norm{W_i}^2_2$.
\end{itemize}
where $\prod_{k=0}^d[h_k]$ is the Cartesian product over sets $[h_k]$. The above bounds indicate that capacity can be bounded in terms of either $\ell_2$-norm or $\ell_1$-path norm independent of number of parameters. The $\ell_2$-path norm dependence on the number of hidden units in each layer is unavoidable. However, it is not clear that the dependence on the number of parameters is needed for the bound based on the spectral norm. 
\removed{
\natinote{Need to add discussion about which ones can be used to bound
  capacity in a way that's independent, or weakly dependent, on number of parameters, and
  which can't, and which we are not sure.}
 }


\begin{figure}[t]
\centering
\includegraphics[width=.245\textwidth]{\figdir/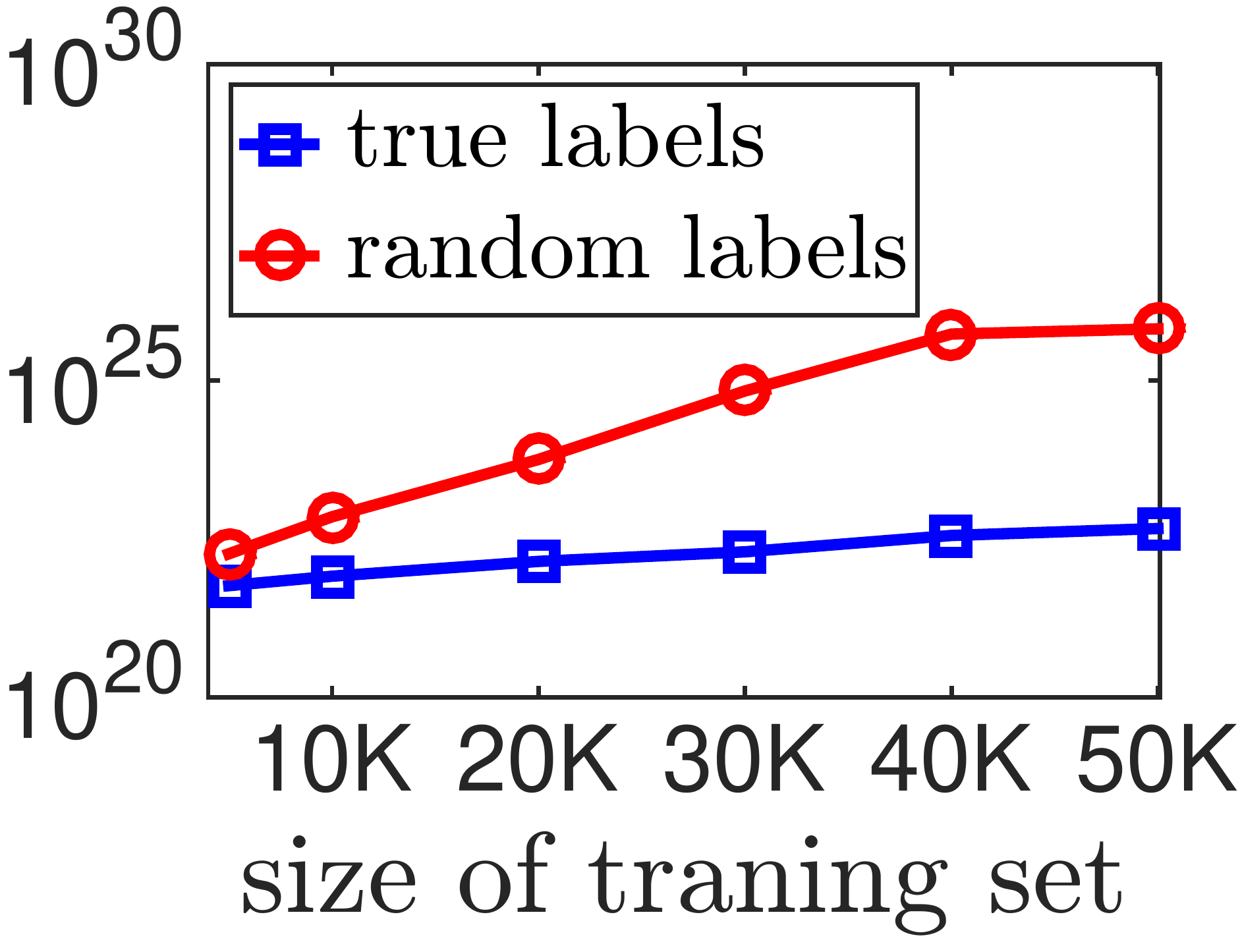}
\includegraphics[width=.245\textwidth]{\figdir/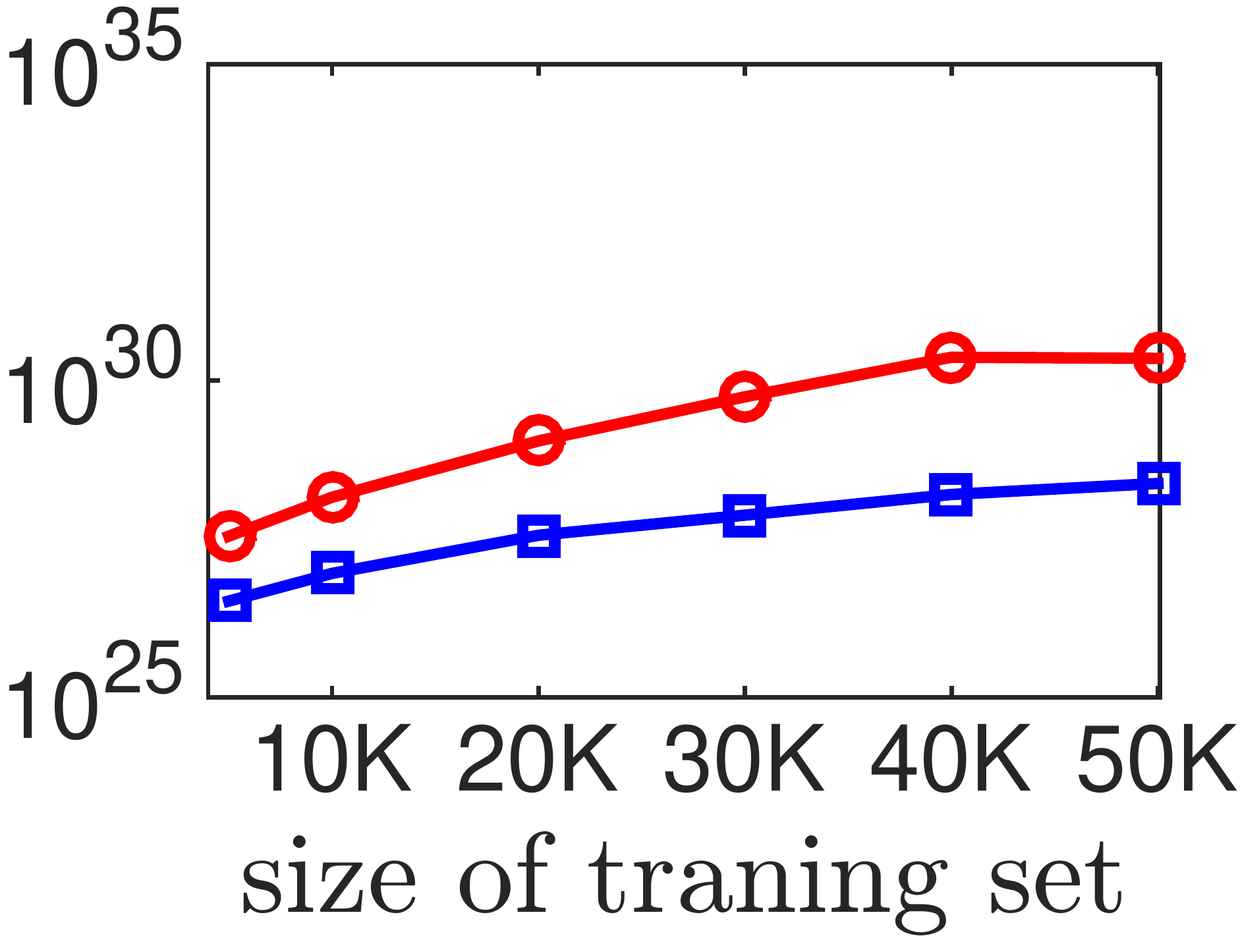}
\includegraphics[width=.245\textwidth]{\figdir/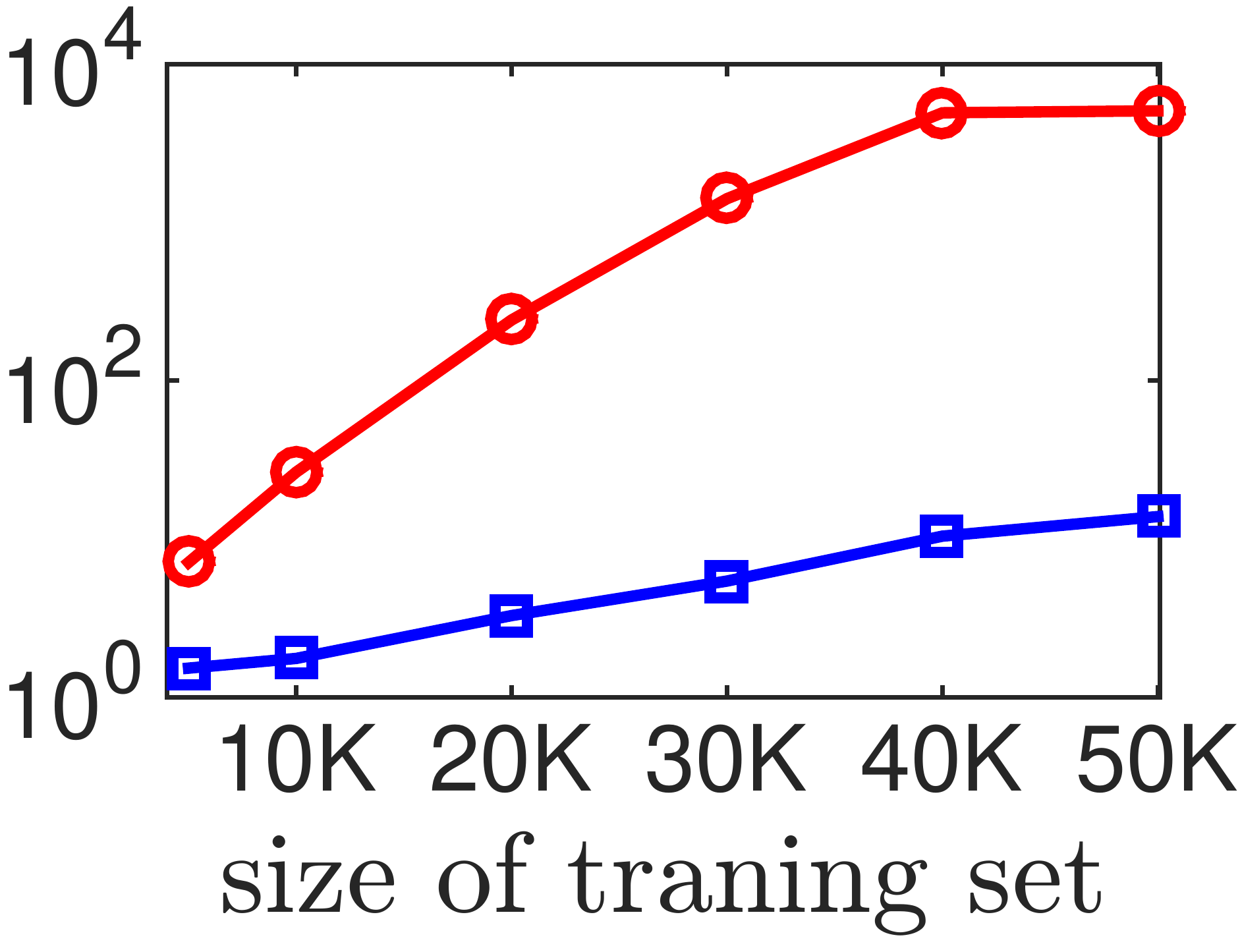}
\includegraphics[width=.245\textwidth]{\figdir/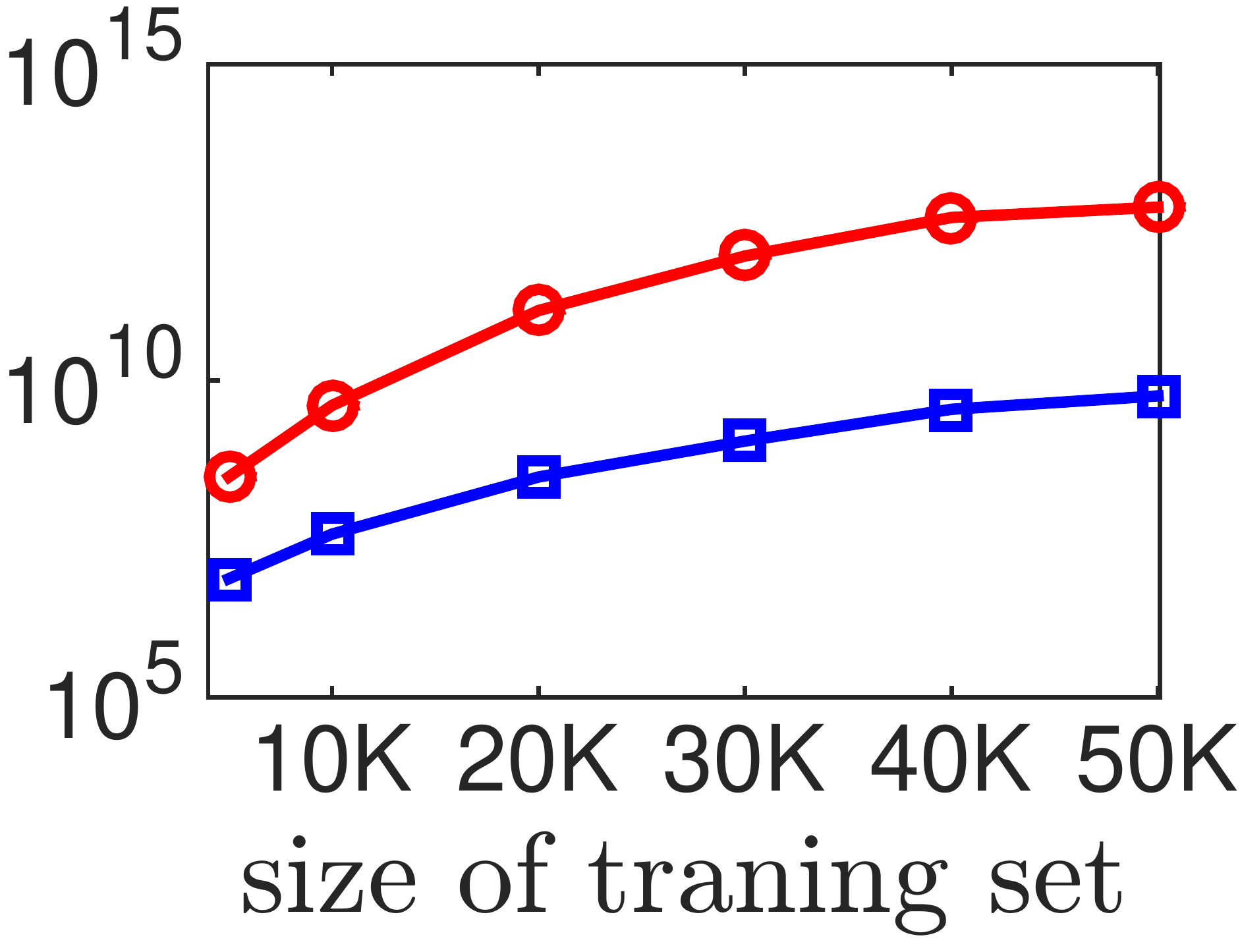}
\begin{picture}(0,0)(0,0)
{\small \put(-162, 85){$\ell_2$ norm}\put(-70, 85){$\ell_1$-path norm}\put(28, 85){$\ell_2$-path norm}\put(125, 85){spectral norm}}
\end{picture}
\caption{\small Comparing different complexity measures on a VGG network trained on subsets of CIFAR10 dataset with true (blue line) or random (red line) labels. We plot norm divided by margin to avoid scaling issues (see Section~\ref{sec:summary}), where for each complexity measure, we drop the terms that only depend on depth or number of hidden units; e.g. for $\ell_2$-path norm we plot $\gamma_{\margin}^{-2}\sum_{j \in \prod_{k=0}^d[h_k]}\prod_{i=1}^d W_i^2[j_i,j_{i-1}]$.We also set the margin over training set $S$ to be $5^{th}$-percentile of the margins of the data points in $S$, i.e. $\text{Prc}_5\left\{f_\vecw(x_i)[y_i] - \max_{y\neq y_i} f_\vecw(\vecx)[y] | (x_i,y_i)\in S\right\}$. In all experiments, the training error of the learned network is zero. The plots indicate that these measures can explain the generalization as the complexity of model learned with random labels is always higher than the one learned with true labels. Furthermore, the gap between the complexity of models learned with true and random labels increases as we increase the size of the training set.}
\label{fig:norm-true-random}
\end{figure}
As an initial empirical investigation of the appropriateness of the
different complexity measures, we compared the complexity (under each
of the above measures) of models trained on true versus random labels.
We would expect to see two phenomena: first, the complexity of models
trained on true labels should be substantially lower than those
trained on random labels, corresponding to their better generalization
ability.  Second, when training on random labels, we expect capacity
to increase almost linearly with the number of training examples, since
every extra example requires new capacity in order to fit it's random
label.  However, when training on true labels we expect the model to
capture the true functional dependence between input and output and
thus fitting more training examples should only require small
increases in the capacity of the network.  The results are reported in
Figure \ref{fig:norm-true-random}.  We indeed observe a gap between
the complexity of models learned on real and random labels for all
four norms, with the difference in increase in capacity between true
and random labels being most pronounced for the $\ell_2$ norm and
$\ell_2$-path norm.

In Section \ref{sec:empirical} we present further empirical
investigations of the appropriateness of these complexity measures to
explaining other phenomena.

\subsection{Lipschitz Continuity and Robustness}\label{subsec:lipschitz}
The measures/norms we discussed so far also control the Lipschitz
constant of the network with respect to its input.  Is the capacity
control achieved through the bound on the Lipschitz constant?  Is bounding the
Lipschitz constant alone enough for generalization?  To answer these
questions, and in order to understand capacity control in terms of
Lipschitz continuity more broadly, we review here the relevant guarantees.

Given an input space $\calX$ and metric $\calM$, a function $f:\calX\rightarrow \R$ on a metric space $(\calX,\calM)$ is called
a Lipschitz function if there exists a constant $C_\calM$, such that $\abs{f(x)-f(y)}\leq C_\calM \calM(x,y)$.
\citet{luxburg2004distance} studied the capacity of functions with
bounded Lipschitz constant on metric space $(\calX,\calM)$ with a finite diameter $\text{diam}_\calM(\calX)=\sup_{x,y\in X} \calM(x,y)$ 
and showed that the capacity is proportional to $\left(\frac{C_\calM}{\gamma_{\margin}}\right)^n \text{diam}_\calM(\calX)$. This capacity bound is weak as it has an
exponential dependence on input size.

Another related approach is through algorithmic robustness as suggested by \citet{xu2012robustness}. Given $\epsilon>0$, the model $f_\vecw$ found by a learning algorithm is $K$ robust if $\calX$ can be partitioned into $K$ disjoint sets, denoted as $\{C_i\}_{i=1}^K$, such that for any pair $(\vecx,y)$ in the training set $\mathbf{s}$ ,\footnote{\citet{xu2012robustness} have defined the robustness as a property of learning algorithm given the model class and the training set. Here since we are focused on the learned model, we introduce it as a property of the model.}
\begin{equation}
\vecx,\vecz\in C_i \Rightarrow \abs{\ell(\vecw,\vecx)-\ell(\vecw,\vecz)}\leq \epsilon
\end{equation}
\citet{xu2012robustness} showed the capacity of a model class whose
models are $K$-robust scales as $K$. For the model class of functions
with bounded Lipschitz $C_{\norm{.}}$, $K$ is proportional to
${\frac{C_{\norm{.}}}{\gamma_{\margin}}}$-covering number of the
input domain $\calX$ under norm $\norm{.}$. However, the covering
number of the input domain can be exponential in the input dimension
and the capacity can still grow as
$\left(\frac{C_{\norm{.}}}{\gamma_{\margin}}\right)^n$~\footnote{Similar to margin-based bounds, we drop the term that depends on the diameter of the input space.}.

Returning to our original question, the $C_{\ell_\infty}$ and $C_{\ell_2}$
Lipschitz constants of the network can be bounded by $\prod_{i=1}^d
\norm{W_i}_{1,\infty}$ (hence $\ell_1$-path norm) and $\prod_{i=1}^d
\norm{W_i}_{2}$,
respectively~\cite{xu2012robustness,sokolic2016generalization}. This
will result in a very large capacity bound that scales as
$\left(\frac{\prod_{i=1}^d \norm{W_i}_{2}}{\gamma_\margin}\right)^n$,
which is exponential in both the input dimension and depth of the
network. This shows that simply bounding the Lipschitz constant of the
network is not enough to get a reasonable capacity control, and the
capacity bounds of the previous Section are not merely a consequence
of bounding the Lipschitz constant.

\subsection{Sharpness}
The notion of sharpness as a generalization measure was recently suggested by \citet{keskar2016large} and corresponds to robustness to adversarial perturbations on the parameter space:
\begin{equation}
\zeta_\alpha(\vecW) = \frac{\max_{\abs{\eps_i}\leq \alpha (\abs{\vecw_i}+\mathbf{1})}\hatl(f_{\vecw+\eps}) - \hatl(f_\vecw)}{1+\hatl(f_\vecw)} \simeq \max_{\abs{\eps_i}\leq \alpha (\abs{\vecw_i}+\mathbf{1})}\hatl(f_{\vecw+\eps}) - \hatl(f_\vecw),
\end{equation}
where the training error $\hatl(f_\vecw)$ is generally very small in the case of neural networks in practice, so we can simply drop it from the denominator without a significant change in the sharpness value. 

As we will explain below, sharpness defined this way does {\em not}
capture the generalization behavior. To see this, we first examine
whether sharpness can predict the generalization behavior for networks
trained on true vs random labels. In the left plot of
Figure~\ref{fig:sharpness-true-random}, we plot the sharpness for
networks trained on true vs random labels.  While sharpness correctly
predicts the generalization behavior for bigger networks, for networks
of smaller size, those trained on random labels have less sharpness
than the ones trained on true labels.  Furthermore sharpness defined
above depends on the scale of $\vecw$ and can be artificially
increased or decreased by changing the scale of the parameters.
Therefore, sharpness alone is not sufficient to control the capacity
of the network. 

Instead, we advocate viewing a related notion of expected sharpness in
the context of the PAC-Bayesian framework.  Viewed this way, it
becomes clear that sharpness controls only one of two relevant terms,
and must be balanced with some other measure such as norm.  Together,
sharpness and norm do provide capacity control and can explain many of
the observed phenomena.  This connection between sharpness and the
PAC-Bayes framework was also recently noted by \citet{dziugaite2017computing}. 

The PAC-Bayesian framework~\cite{mcallester1998some,mcallester1999pac}
provides guarantees on the expected error of a randomized predictor
(hypothesis), drawn form a distribution denoted $\mathcal{Q}$ and
sometimes referred to as a ``posterior'' (although it need {\em not}
be the Bayesian posterior), that depends on the training data. Let $f_\vecw$
be any predictor (not necessarily a neural network) learned from training data.
We consider a distribution $\mathcal{Q}$ over predictors with
weights of the form $\vecw+\eps$, where $\vecw$ is a single predictor
learned from the training set, and $\eps$ is a random variable.  Then,
given a ``prior'' distribution $P$ over the hypothesis that is
independent of the training data, with probability at least $1-\delta$
over the draw of the training data, the expected error of
$f_{\vecw+\eps}$ can be bounded as follows~\cite{mcallester2003simplified}:
\begin{small}
\begin{equation}\label{eq:pac-bayes-general}
\E_{\eps} [L(f_{\vecw+\eps})] \leq \E_{\eps} [\hatl(f_{\vecw+\eps})] +\sqrt{\E_{\eps} [\hatl(f_{\vecw+\eps})]\calK}+\calK
\end{equation}
\end{small}
where $\calK=\frac{2\left(KL\left(\vecw+\eps\|P\right)+\ln\frac{2m}{\delta}\right)}{m-1}$. When the training loss $\E_{\eps} [\hatl(f_{\vecw+\eps})]$ is smaller than $\calK$, then the last term dominates. This is often the case for neural networks with small enough perturbation. One can also get the the following weaker bound:
\begin{small}
\begin{equation}\label{eq:pac-bayes-simple}
\E_{\eps} [L(f_{\vecw+\eps})] \leq \E_{\eps} [\hatl(f_{\vecw+\eps})] +4\sqrt{\frac{\left(KL\left(\vecw+\eps\|P\right)+\ln\frac{2m}{\delta}\right)}{m}}
\end{equation}
\end{small}
The above inequality clearly holds for $\calK\geq 1$ and for $\calK<1$ it can be derived from Equation~\eqref{eq:pac-bayes-general} by upper bounding the loss in the second term by $1$. We can rewrite the above bound as follows:
\begin{small}
\begin{equation}
\E_{\eps} [L(f_{\vecw+\eps})] \leq \hatl(f_\vecw) + \underbrace{\E_{\eps} [\hatl(f_{\vecw+\eps})] -\hatl(f_\vecw)}_{\text{expected sharpness}} +4\sqrt{\frac{1}{m}\left(KL\left(\vecw+\eps\|P\right)+\ln\frac{2m}{\delta}\right)}
\end{equation}
\label{eq:pacbayes}
\end{small}

\removed{
For a given norm $\norm{.}$ and maximum perturbation magnitude $\gamma_{\text{param}}$, sharpness ($\error_{\gamma_{\text{param}}}$) is defined as the maximum change in the loss with a perturbation of magnitude $\gamma_{\text{param}}$. However, it seems that rather than measuring the sharpness for a fixed $\gamma_{\text{param}}$, it might be more useful to fix a small $\error$ and find the maximum $\gamma_{\text{param}}$ that changes the loss by at most $\error$:
\begin{equation}
\gamma_{\text{param}} = \max_{\gamma \geq 0} \enskip\gamma\qquad\text{s.t.}\qquad  \error \geq \max_{\norm{\eps} \leq \gamma}\frac{1}{m} \sum_{i=1}^m\ell( f_{W+\eps}(x_i), y_i) - \ell( f_\vecw(x_i), y_i)
\end{equation}

Sharpness is shown to correlate with generalization error of models trained with large batch sizes \cite{keskar2016large}. However, sharpness defined this way based on adversarial perturbation seems to be an unstable measure (see left panel in Figure~\ref{fig:sharpness-true-random}). Hence, we propose looking at {\it expected sharpness} of the network, where we measure sharpness by setting $\eps$ according to a distribution. Even more significantly, sharpness defined above depends on the scale of $W$ and can be artificially increased or decreased by changing the scale of the parameters. 

Interestingly, this notion of expected sharpness appears in the generalization error computation according to PAC-Bayesian framework. PAC-Bayesian framework is a popular way of thinking about generalization behavior, where the hypothesis (predictor) is chosen from a distribution $Q$, with mean $W$, $Q= W+\eps$. Then, given a prior $P$ over parameters, with probability at least $1-\delta$ over the draw of the samples,  the generalization error can be decomposed as follows:
\begin{small}
\begin{equation}
\E_{\eps} [L(f_{\vecw+\eps})] \leq \hatl(f_\vecw) + \underbrace{\E_{\eps} [\hatl(f_{W+\eps})] -\hatl(f_\vecw)}_{\text{sharpness}} +\sqrt{\frac{1}{m}\left(KL\left(W+\eps\|P\right)+\ln(\frac{1}{\delta})\right)}.
\end{equation}
\label{eq:pacbayes}
\end{small}
}

\begin{figure}[t]
\centering
\includegraphics[width=.32\textwidth]{\figdir/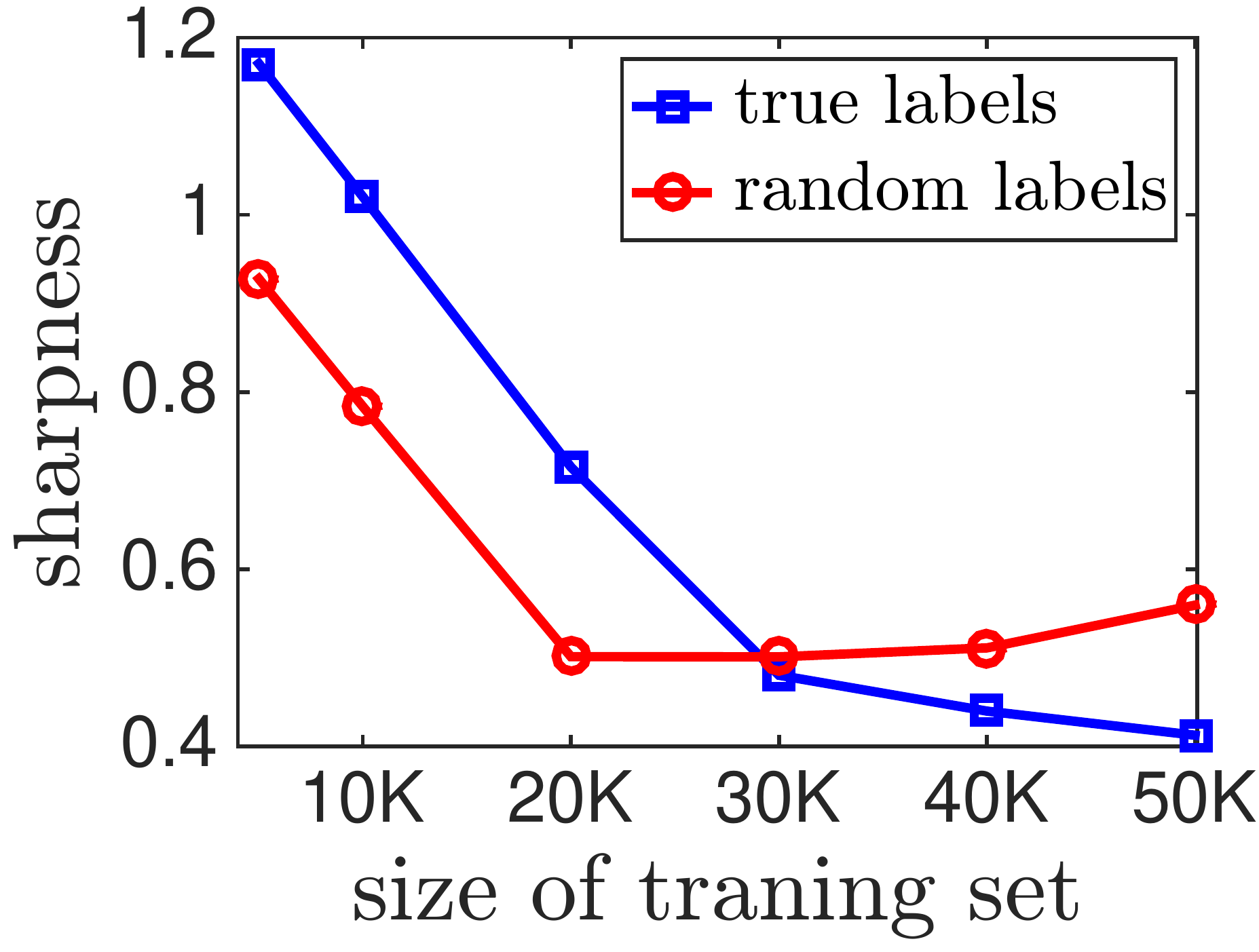}
\includegraphics[width=.32\textwidth]{\figdir/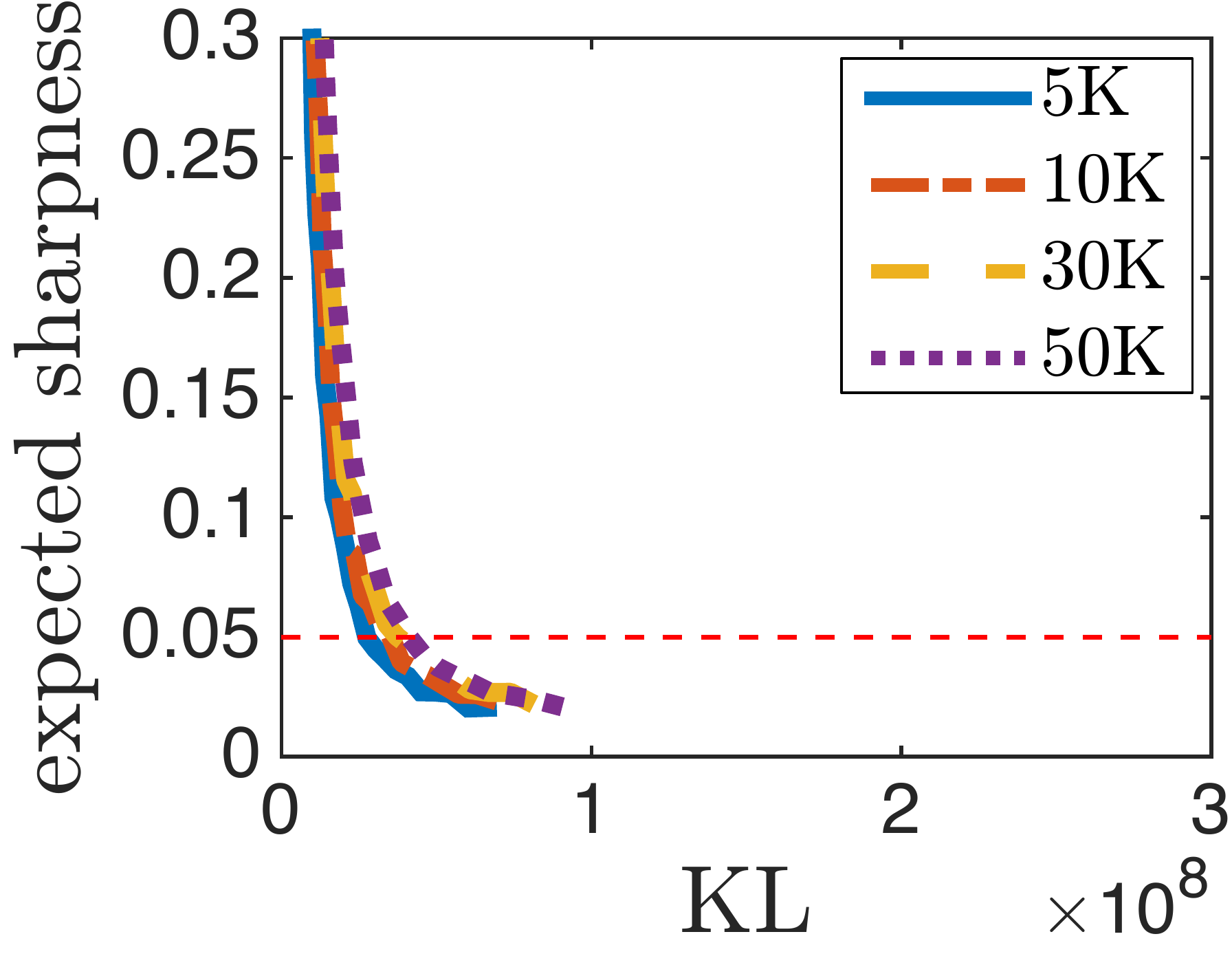}
\includegraphics[width=.32\textwidth]{\figdir/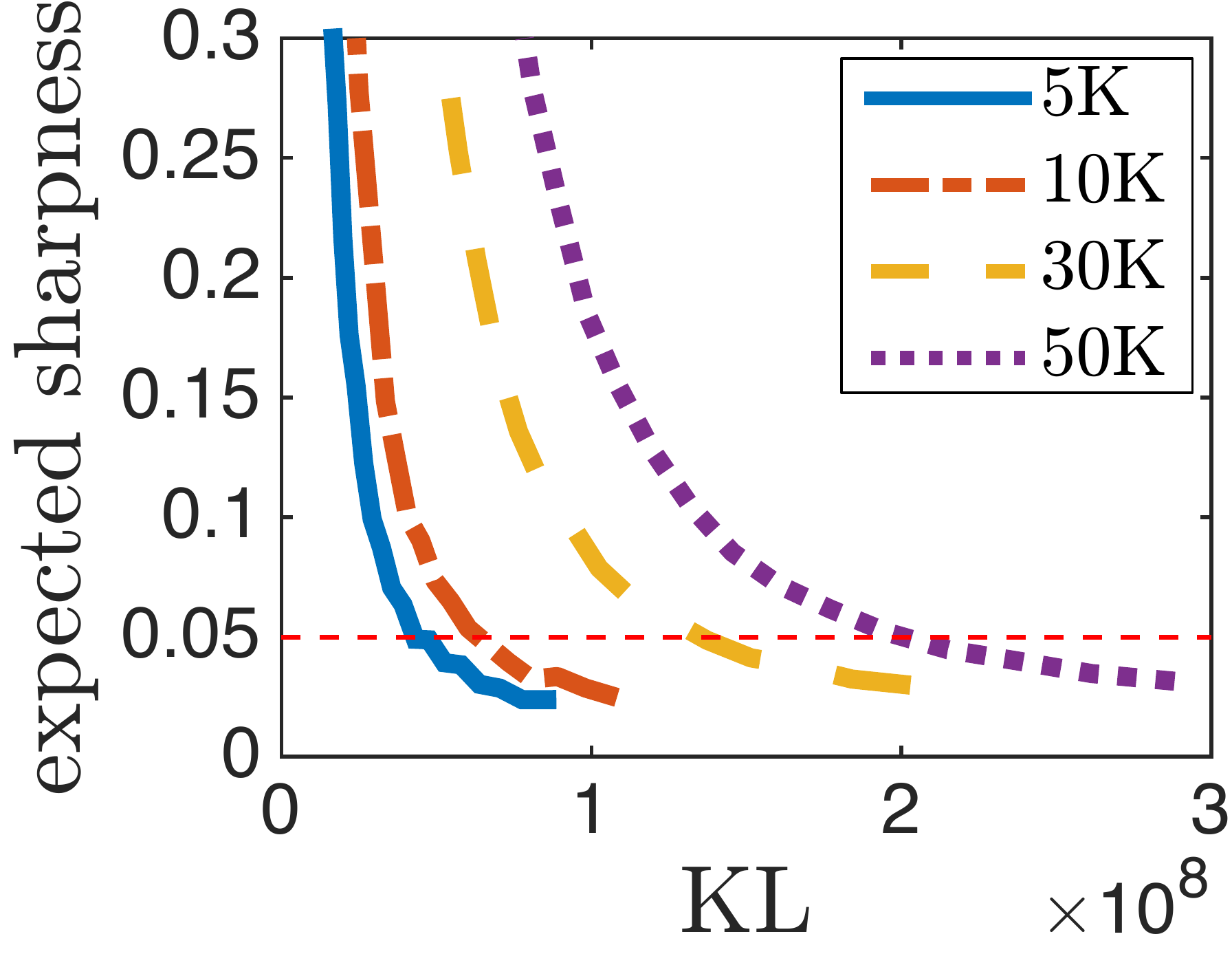}
\caption{\small Sharpness and PAC-Bayes measures on a VGG network
  trained on subsets of CIFAR10 dataset with true or random labels. In
  the left panel, we plot max sharpness, which we calculate as
  suggested by \citet{keskar2016large} where the perturbation for
  parameter $w_i$ has magnitude $5.10^{-4}(\abs{w_i}+1)$. The middle
  and right plots demonstrate the relationship between expected
  sharpness and KL divergence in PAC-Bayes analysis for true and
  random labels respectively. For PAC-Bayes plots, each point in the
  plot correspond to a choice of variable $\alpha$ where the standard
  deviation of the perturbation for the parameter $i$ is
  $\alpha(10\abs{w_i}+1)$. The corresponding $KL$ to each $\alpha$ is
  nothing but weighted $\ell_2$ norm where the weight for each
  parameter is the inverse of the standard deviation of the
  perturbation.}
\begin{picture}(0,0)(0,0)
{\small \put(-6, 196){true labels}\put(114, 196){random labels}}
\end{picture}
\label{fig:sharpness-true-random}
\end{figure}

As we can see, the PAC-Bayes bound depends on two quantities - i) the
expected sharpness and ii) the Kullback Leibler (KL) divergence to the
``prior'' $P$.  The bound is valid for any distribution measure $P$,
any perturbation distribution $\eps$ and any method of choosing
$\vecw$ dependent on the training set.  A simple way to instantiate the bound
is to set $P$ to be a zero mean, $\sigma^2$ variance Gaussian distribution.
Choosing the perturbation $\eps$ to also be a zero mean spherical
Gaussian with variance $\sigma^2$ in every direction, yields the
following guarantee (w.p.~$1-\delta$ over the training set):
\begin{small}
\begin{equation}\label{eq:pacbayes2}
\E_{\eps \sim \mathcal{N}(0,\sigma)^n} [L(f_{\vecw+\eps})] \leq \hatl(f_\vecw) + \underbrace{\E_{\eps \sim \mathcal{N}(0,\sigma)^n} [\hatl(f_{\vecw+\eps})] -\hatl(f_\vecw) }_{\text{expected sharpness}}+ 4\sqrt{\frac{1}{m} \bigg( \underbrace{\frac{\|\vecw\|_2^2}{2\sigma^2} }_{\text{KL}}+ \ln \frac{2m}{\delta} \bigg) },
\end{equation}
\end{small}
Another interesting approach is to set the variance of the perturbation to each parameter with respect to the magnitude of the parameter. For example if $\sigma_i=\alpha \abs{w_i}+\beta$, then the KL term in the above expression changes to $\sum_i\frac{w_i^2}{2\sigma_i^2} $. 

The above generalization guarantees give a clear way to think about capacity control jointly in terms of both the expected sharpness and the norm, and as we discussed earlier indicates that sharpness by itself cannot control the capacity without considering the scaling. In the above generalization bound, norms and sharpness interact in a direct way depending on $\sigma$, as increasing the norm by decreasing $\sigma$ causes decrease in sharpness and vice versa. It is therefore important to find the right balance between the norm and sharpness by choosing $\sigma$ appropriately in order to get a reasonable bound on the capacity.

In our experiments we observe that looking at both these measures
jointly indeed makes a better predictor for the generalization error.
As discussed earlier, \citet{dziugaite2017computing} numerically
optimize the overall PAC-Bayes generalization bound over a family of
multivariate Gaussian distributions (different choices of
perturbations and priors). Since the precise way the sharpness and
KL-divergence are combined is not tight, certainly not in
\eqref{eq:pacbayes2}, nor in the more refined bound used by
\citet{dziugaite2017computing}, we prefer shying away from numerically
optimizing the balance between sharpness and the KL-divergence.
Instead, we propose using bi-criteria plots, where sharpness and
KL-divergence are plotted against each other, as we vary the
perturbation variance.  For example, in the center and right panels of
Figure \ref{fig:sharpness-true-random} we show such plots for networks
trained on true and random labels respectively.  We see that although
sharpness by itself is not sufficient for explaining generalization in
this setting (as we saw in the left panel), the bi-criteria plots are
significantly lower for the true labels.  Even more so, the change in
the bi-criteria plot as we increase the number of samples is
significantly larger with random labels, correctly capturing the
required increase in capacity.  For example, to get a fixed value of
expected sharpness such as $\error=0.05$, networks trained with random
labels require higher norm compared to those trained with true labels.
This behavior is in agreement with our earlier discussion, that
sharpness is sensitive to scaling of the parameters and is not a
capacity control measure as it can be artificially changed by scaling
the network. However, combined with the norm, sharpness does seem to
provide a capacity measure.



\section{Empirical Investigation}\label{sec:empirical}

In this section we investigate the ability of the discussed measures
to explain the the generalization phenomenon discussed in the
Introduction.  We already saw in Figures \ref{fig:norm-true-random}
and \ref{fig:sharpness-true-random} that these measures capture the
difference in generalization behavior of models trained on true or
random labels, including the increase in capacity as the sample size
increases, and the difference in this increase between true and random
labels.

\subsection*{Different Global Minima}

Given different global minima of the training loss on the same
training set and with the same model class, can these measures
indicate which model is going to generalize better? In order to verify
this property, we can calculate each measure on several different
global minima and see if lower values of the measure imply lower
generalization error. In order to find different global minima for the
training loss, we design an experiment where we force the optimization
methods to converge to different global minima with varying
generalization abilities by forming a confusion set that includes
samples with random labels. The optimization is done on the loss that
includes examples from both the confusion set and the training set.
Since deep learning models have very high capacity, the optimization
over the union of confusion set and training set generally leads to a
point with zero error over both confusion and training sets which thus
is a global minima for the training set.

We randomly select a subset of CIFAR10 dataset with 10000 data points
as the training set and our goal is to find networks that have zero
error on this set but different generalization abilities on the test
set. In order to do that, we train networks on the union of the
training set with fixed size 10000 and confusion sets with varying
sizes that consists of CIFAR10 samples with random labels; and we
evaluate the learned model on an independent test set. The trained
network achieves zero training error but as shown in Figure
\ref{fig:cifar-core}, the test error of the model increases with
increasing size of the confusion set. The middle panel of this Figure
suggests that the norm of the learned networks can indeed be
predictive of their generalization behavior. However, we again observe
that sharpness has a poor behavior in these experiments. The right
panel of this figure also suggests that PAC-Bayes measure of joint
sharpness and KL divergence, has better behavior - for a fixed
expected sharpness, networks that have higher generalization error,
have higher norms.


\subsection*{Increasing Network Size}

We also repeat the experiments conducted by \citet{neyshabur15b} where
a fully connected feedforward network is trained on MNIST dataset with
varying number of hidden units and we check the values of different
complexity measures on each of the learned networks.The left panel in
Figure \ref{fig:hidden} shows the training and test error for this
experiment. While 32 hidden units are enough to fit the training data,
we observe that networks with more hidden units generalize better.
Since the optimization is done without any explicit regularization,
the only possible explanation for this phenomenon is the implicit
regularization by the optimization algorithm. Therefore, we expect a
sensible complexity measure to decrease beyond 32 hidden units and
behave similar to the test error. Different measures are reported for
learned networks. The middle panel suggest that all margin/norm based
complexity measures decrease for larger networks up to 128 hidden
units. For networks with more hidden units, $\ell_2$ norm and
$\ell_1$-path norm increase with the size of the network. The middle
panel suggest that $\ell_2$-path norm can provide some explanation for
this phenomenon. However, as we discussed in
Section~\ref{sec:summary}, the actual complexity measure based on
$\ell_2$-path norm also depends on the number of hidden units and
taking this into account indicates that the measure based on
$\ell_2$-path norm cannot explain this phenomenon. This is also the
case for the margin based measure that depends on the spectral norm.
In subsection \ref{subsec:lipschitz} we discussed another complexity
measure that also depends the spectral norm through Lipschitz
continuity or robustness argument. Even though this bound is very
loose, it is monotonic with respect to the spectral norm that is
reported in the plots. Unfortunately, we do observe some increase in
spectral norm by increasing number of hidden units beyond 512. The
right panel shows that the joint PAC-Bayes measure decrease for larger
networks up to size 128 but fails to explain this generalization
behavior for larger networks. This suggests that the measures looked so
far are not sufficient to explain all the generalization phenomenon
observed in neural networks.

\begin{figure}[t]
\centering
\includegraphics[width=.32\textwidth]{\figdir/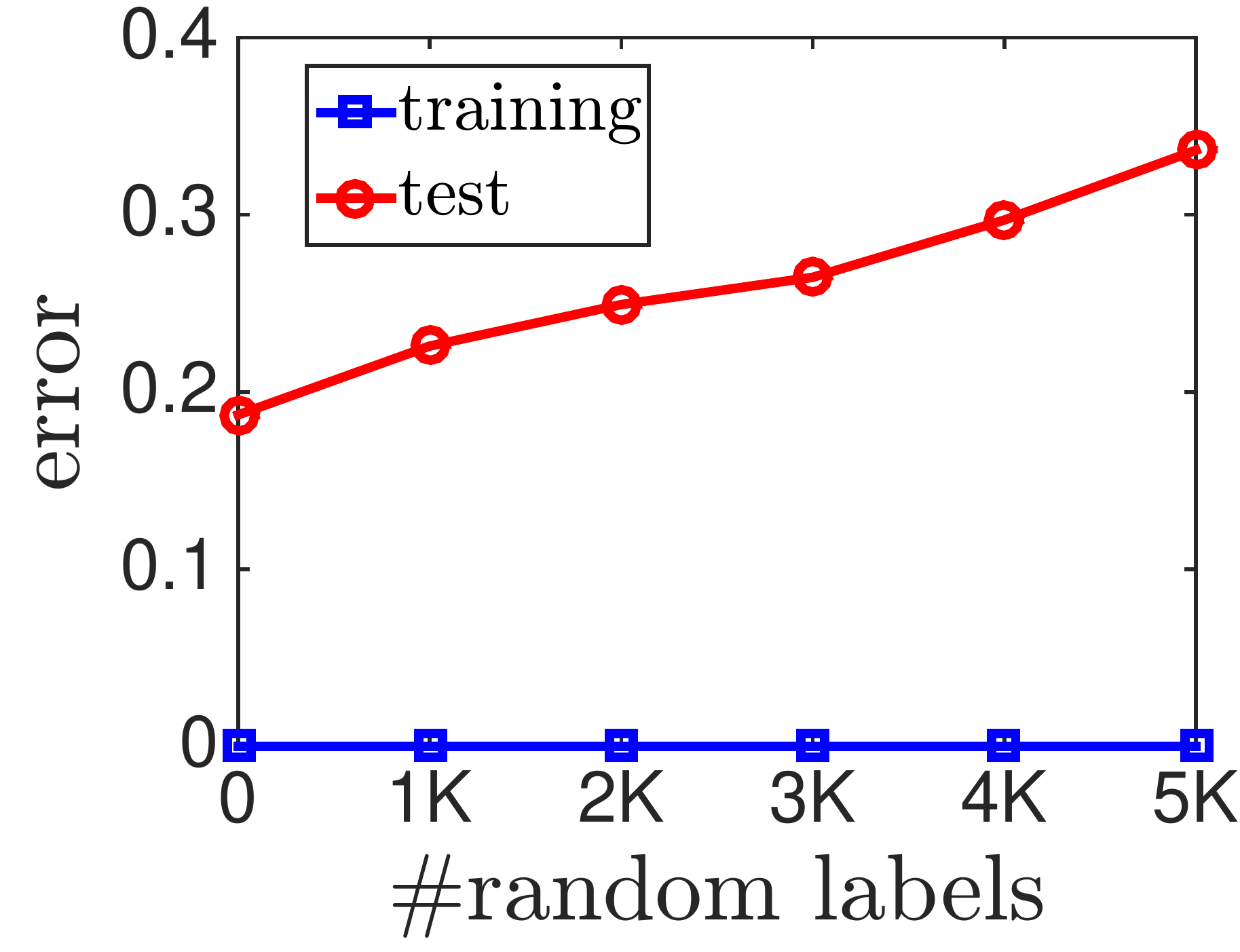}
\includegraphics[width=.32\textwidth]{\figdir/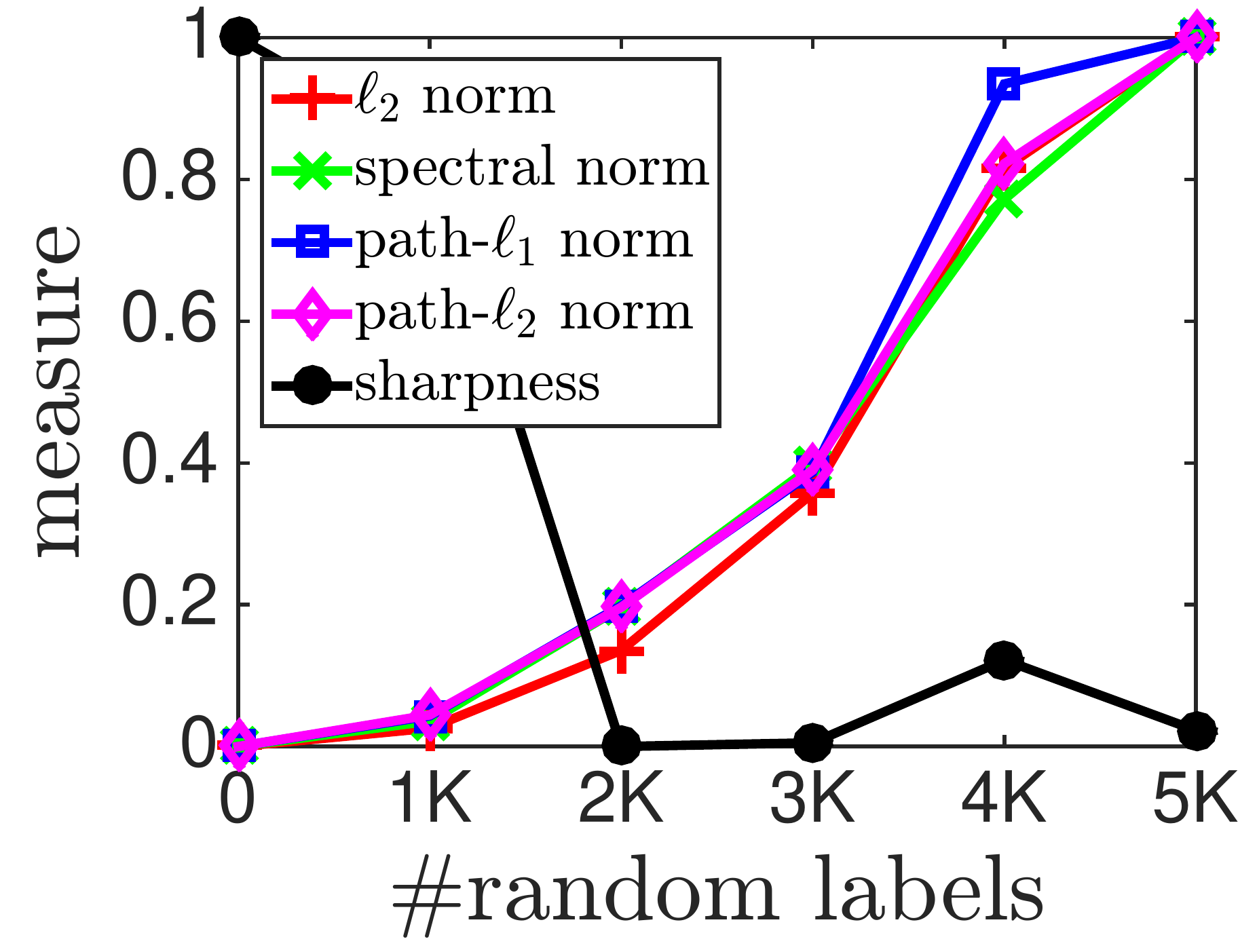}
\includegraphics[width=.32\textwidth]{\figdir/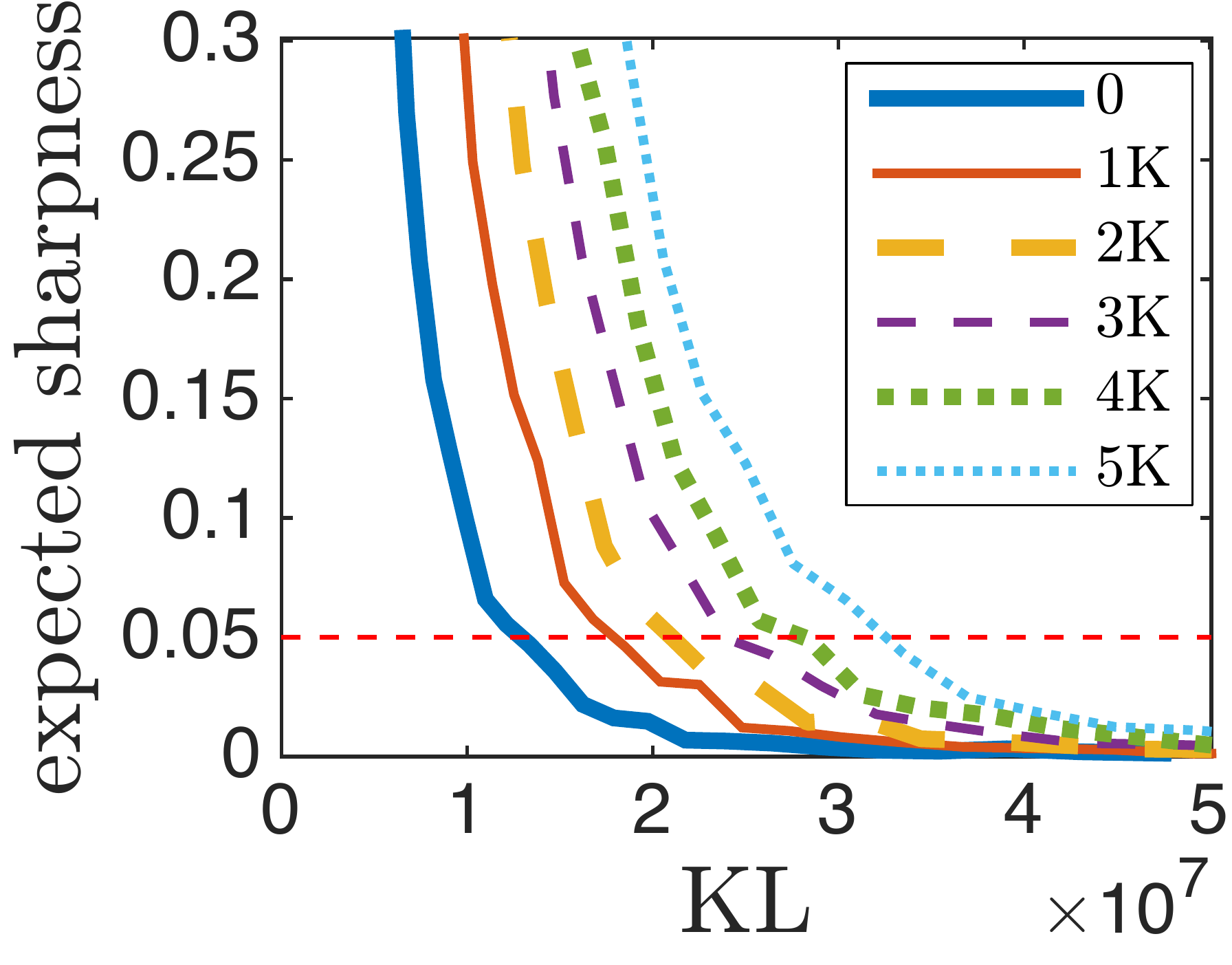}
\caption{\small Experiments on global minima with poor generalization. For each experiment, a VGG network is trained on union of a subset of CIFAR10 dataset with size 10000 containing samples with true labels and another subset of CIFAR10 datasets with varying size containing random labels. The learned networks are all global minima for the objective function on the subset with true labels. The left plot indicates the training and test errors based on the size of the set with random labels. The plot in the middle shows change in different measures based on the size of the set with random labels. The plot on the right indicates the relationship between expected sharpness and KL in PAC-bayes for each of the experiments. Measures are calculated as explained in Figures \ref{fig:norm-true-random} and \ref{fig:sharpness-true-random}.}
\label{fig:cifar-core}
\end{figure}

\begin{figure}[t]
\centering
\includegraphics[width=.32\textwidth]{\figdir/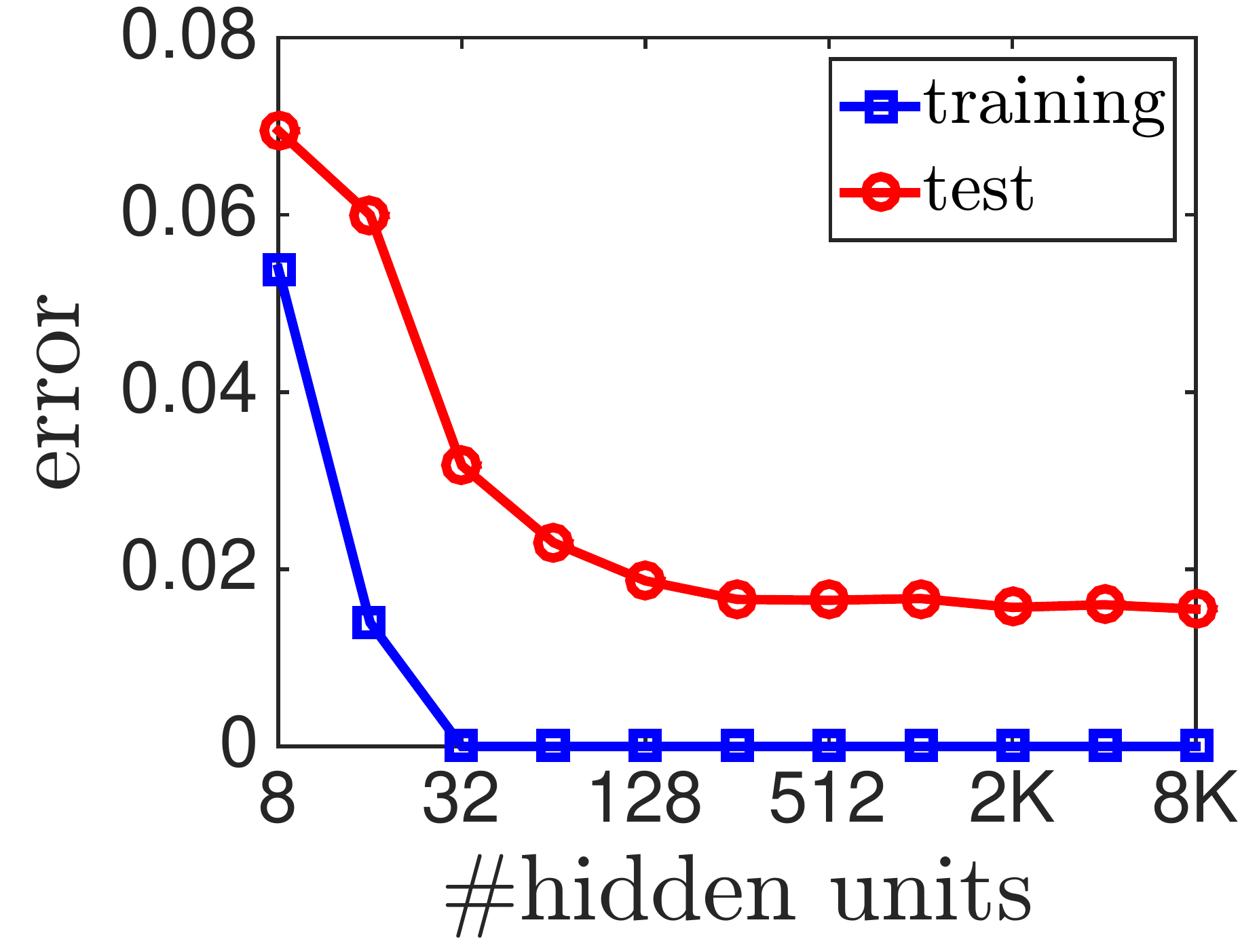}
\includegraphics[width=.32\textwidth]{\figdir/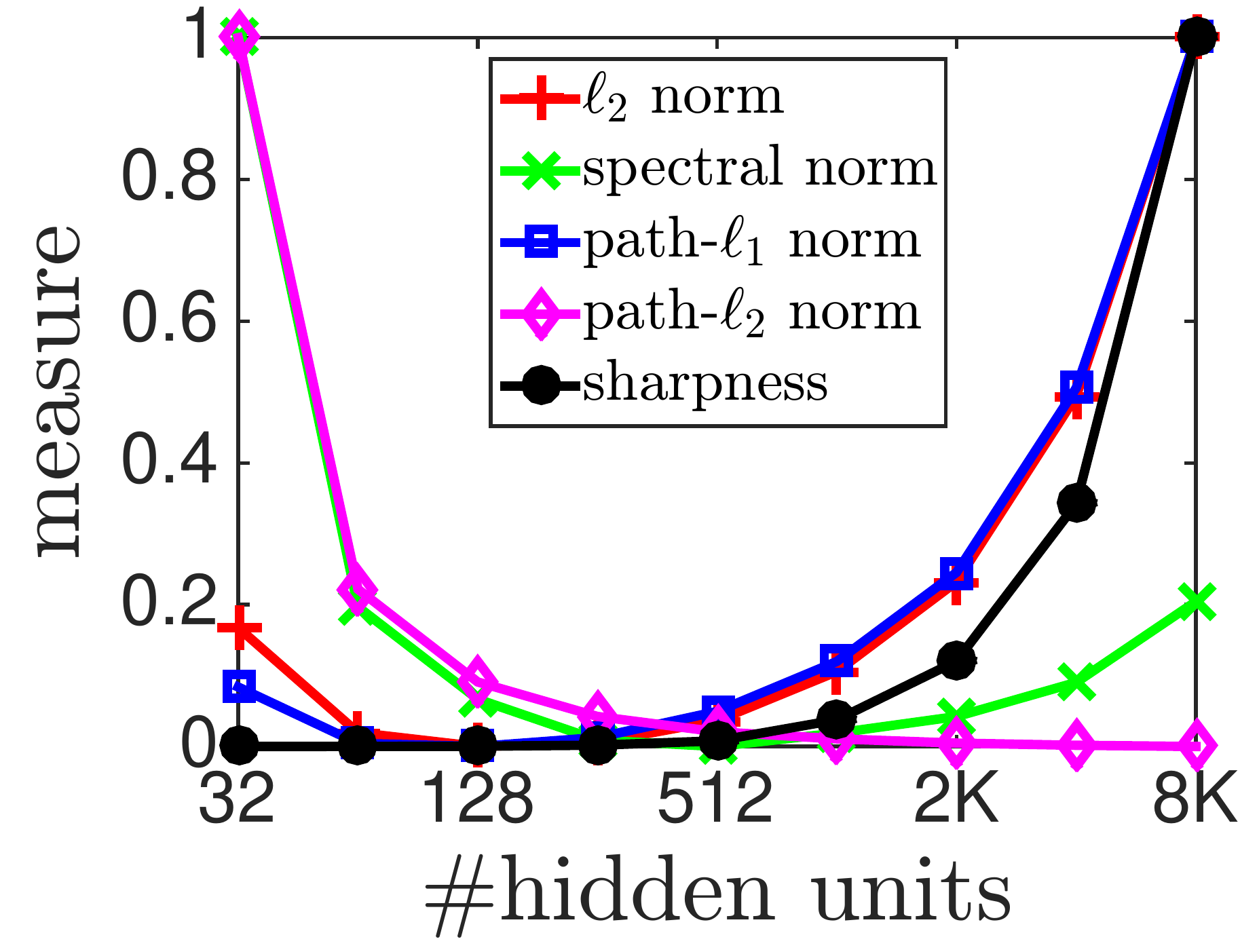}
\includegraphics[width=.32\textwidth]{\figdir/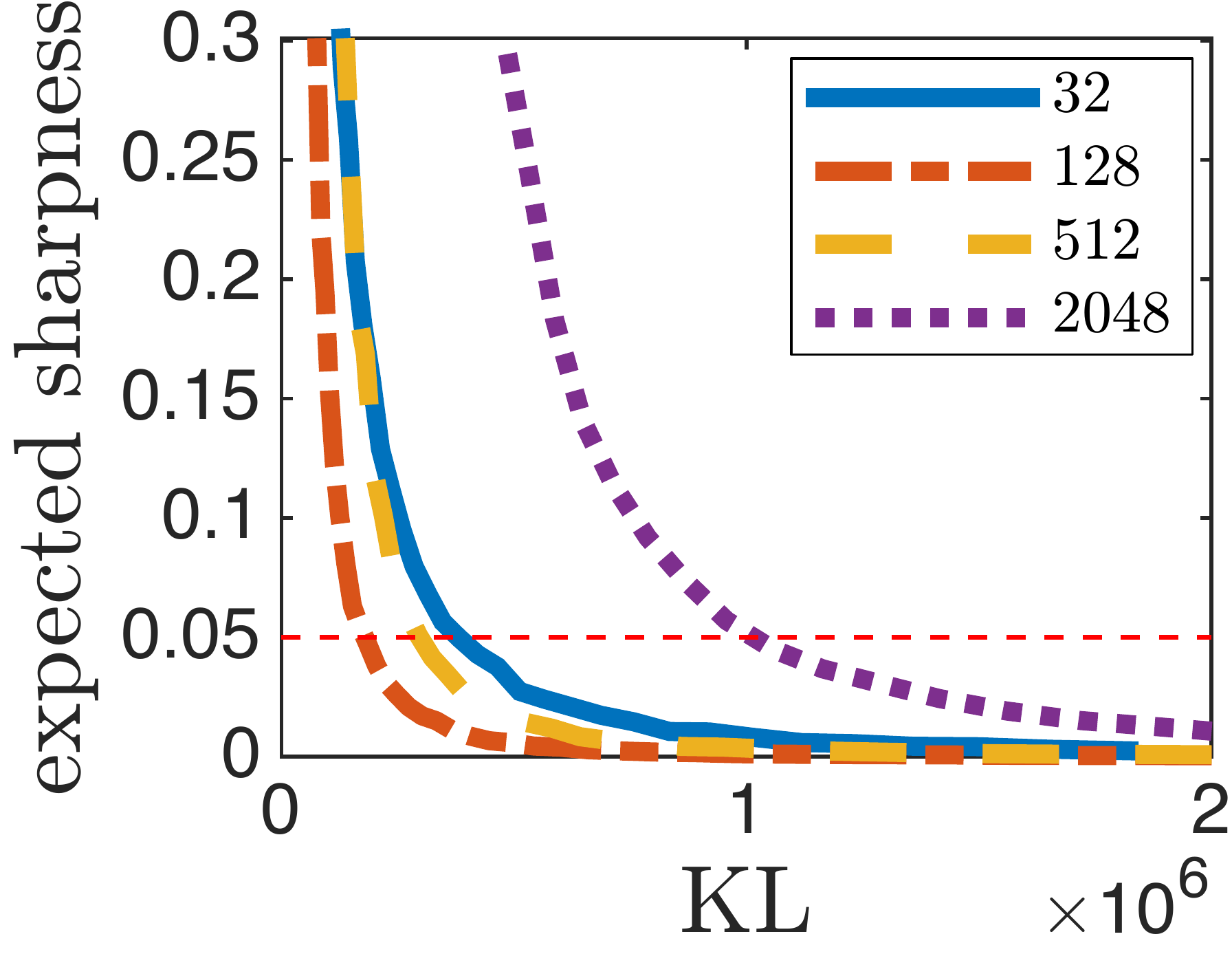}
\caption{\small The generalization of two layer perceptron trained on MNIST dataset with varying number of hidden units. The left plot indicates the training and test errors. The test error decreases as the size increases. The middle plot shows different measures for each of the trained networks. The plot on the right indicates the relationship between expected sharpness and KL in PAC-Bayes for each of the experiments. Measures are calculated as explained in Figures \ref{fig:norm-true-random} and \ref{fig:sharpness-true-random}.}
\label{fig:hidden}
\end{figure}

\section{Bounding Sharpness}\label{sec:pac_bayes}

So far we have discussed margin based and sharpness based complexity measures to understand capacity. We have also discussed how sharpness based complexity measures in combination with norms characterize the generalization behavior under the PAC-Bayes framework. In this section we study the question of what affects the sharpness of neural networks? For the case of linear predictors, sharpness only depends on the norm of the predictor. In contrast, for multilayered networks, interaction between the layers plays a major role and consequently two different networks with the same norm can have drastically different sharpness values. For example, consider a network where some subset of the layers despite having non-zero norm interact weakly with their neighbors, or are almost orthogonal to each other. Such a network will have very high sharpness value compared to a network where the neighboring layers interact strongly.

In this section we establish sufficient conditions to bound the expected sharpness of a feedforward network with ReLU activations.  Such conditions serve as a useful guideline in studying what helps an optimization method to converge to less sharp optima. Unlike existing generalization bounds~\cite{bartlett2002rademacher,NeyTomSre15,luxburg2004distance,xu2012robustness,sokolic2016generalization}, our sharpness based bound does not suffer from exponential dependence on depth.

Now we discuss the conditions that affect the sharpness of a network. As discussed earlier, weak interactions between layers can cause the network to have high sharpness value. Condition $C1$ below prevents such weak interactions (cancellations). A network can also have high sharpness if the changes in the number of activations is exponential in the perturbations to its weights, even for small perturbations. Condition $C2$ avoids such extreme situations on activations. Finally, if a non-active node with large weights becomes active because of the perturbations in lower layers, that can lead to huge changes to the output of the network. Condition $C3$ prevents having such spiky (in magnitude) hidden units. This leads us to the following three conditions, that help in avoiding such pathological cases.

\begin{itemize}
\item[$(C1):$] Given $x$, let $x=W_0$ and $D_0 =I$. Then, for all $0 \leq a < c < b \leq d, \| \left(\Pi_{i=a}^{b} D_{i}W_i \right)\|_F \geq  \frac{\mu}{\sqrt{h_c}}  \| \Pi_{i=c+1}^{b} D_{i}W_i  \|_F \|  \left(\Pi_{i=a}^{c} D_{i}W_i \right)\|_F $.
\item[$(C2):$] Given $x$, for any level $k$, $\frac{1}{h_k} \sum_{i \in [h_k]} 1_{W_{k,i} \Pi_{j=1}^{k-1} D_j W_j x \leq \delta} \leq C_2 \delta$.
\item[$(C3):$] For all $i$, $\| W_{i}\|_{2,\infty}^2  h_i \leq C_3^2 \|D_i W_i\|_F^2$.
\end{itemize}

Here, $W_{k, i}$ denotes the weights of the $i^{th}$ output node in layer $k$. $\| W_{i}\|_{2,\infty}$ denotes the maximum $L2$ norm of a hidden unit in layer $i$. Now we state our result on the generalization error of a ReLU network, in terms of average sharpness and its norm. Let $\|x\| = 1$ and $h=\max_{i=1}^d h_i$. 

\begin{thm}\label{thm:relu}
Let $\eps_i$ be a random $h_i \times h_{i-1}$ matrix with each entry distributed according to $\mathcal{N}(0,\sigma_i^2)$. Then, under the conditions $C1, C2, C3$,  with probability $\geq 1-\delta$, 
\begin{small}
\begin{align*}
&\E_{\eps \sim \mathcal{N}(0,\sigma)^n} [L(f_{\vecw+\eps})] - \hatl(f_\vecw) \leq O \left( \left[\Pi_{i=1}^d \left(1+\gamma_i \right) -1 \right. \right. \\ & \left. \left. + \Pi_{i=1}^d \left( 1+\gamma_i C_2 C_3 \right)\left(\Pi_{i=1}^d (1+ \gamma_i  C_{\delta} C_2)- 1\right) \right] C_{L} \sum_x \frac{\|f_\vecw(x)\|_F}{m} \right)+ \sqrt{\frac{1}{m}\left(\sum_{i=1}^d \frac{\|W_i\|_F^2 }{\sigma_i^2 } +  \ln \frac{2m}{\delta} \right)}.
\end{align*}
\end{small}
where $\gamma_i = \frac{\sigma_i \sqrt{h_i} \sqrt{h_{i-1}}}{\mu^2 \|W_i\|_F}$ and $C_{\delta}=2\sqrt{\ln(dh/\delta)} $.
\end{thm}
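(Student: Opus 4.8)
The plan is to start from the PAC-Bayes guarantee already derived in \eqref{eq:pacbayes2}, specialized to the weighted perturbation in which each $\eps_i$ has i.i.d.\ $\mathcal{N}(0,\sigma_i^2)$ entries. Under this choice the KL term becomes $\sum_{i=1}^d \norm{W_i}_F^2/(2\sigma_i^2)$, which (up to the universal constant absorbed into the square root) is exactly the trailing term of the claimed bound. Everything else reduces to controlling the expected sharpness $\E_\eps[\hatl(f_{\vecw+\eps})] - \hatl(f_\vecw)$. Assuming the per-example loss is $C_L$-Lipschitz in the network output, this quantity is at most $C_L \E_\eps \frac1m \sum_x \norm{f_{\vecw+\eps}(x) - f_\vecw(x)}$, so the whole problem collapses to bounding the expected output perturbation $\E_\eps\norm{f_{\vecw+\eps}(x)-f_\vecw(x)}$ for a fixed input and expressing it \emph{relative} to $\norm{f_\vecw(x)}$; this produces the leading bracketed factor together with the $\sum_x \norm{f_\vecw(x)}_F/m$ scaling.

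Next I would split the output perturbation into two conceptually distinct effects. Writing $A_i = D_i W_i$ for the clean layerwise factors and $D_i'$ for the activation pattern of the perturbed network, I decompose $f_{\vecw+\eps}(x)-f_\vecw(x) = \big[\Pi_{i=1}^d D_i(W_i+\eps_i) - \Pi_{i=1}^d D_i W_i\big]x + \big[\Pi_{i=1}^d D_i'(W_i+\eps_i) - \Pi_{i=1}^d D_i(W_i+\eps_i)\big]x$. The first bracket is the change due to weight perturbation with \emph{frozen} activations; the second is the change caused by activations flipping. For the first term I expand the ordered product multilinearly into $2^d$ monomials, bound each by submultiplicativity, and reassemble into $\Pi_{i=1}^d(\norm{A_i}+\norm{D_i\eps_i}) - \Pi_{i=1}^d\norm{A_i}$. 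To convert this from a ratio against $\Pi_i\norm{A_i}$ into a ratio against $\norm{f_\vecw(x)}$ I invoke condition $C1$ repeatedly, which lower-bounds the norm of any contiguous product by a $\mu$-fraction of the product of its sub-norms; this is precisely where the $\mu^2$ in the denominator of $\gamma_i$ appears. Using standard Gaussian matrix concentration to bound $\norm{\eps_i}$ by roughly $\sigma_i\sqrt{h_i}\sqrt{h_{i-1}}$ then yields the per-layer relative factor $\gamma_i$, so the frozen-activation contribution is $O(\Pi_{i=1}^d(1+\gamma_i)-1)$.

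For the activation-flip term I would argue that a coordinate of $D_i'$ differs from $D_i$ only when the corresponding pre-activation is pushed across zero, i.e.\ its clean magnitude lies within the accumulated perturbation of the boundary. Condition $C2$ bounds the \emph{fraction} of such near-boundary units at each level by $C_2\delta$, where $\delta$ is the magnitude of the perturbation reaching that layer; a Gaussian tail bound on the accumulated perturbation (this is what produces $C_\delta = 2\sqrt{\ln(dh/\delta)}$, via a union bound over the $dh$ units) lets me control $\delta$ by $C_\delta$ times the per-layer perturbation scale with high probability. Condition $C3$ then caps the magnitude contributed by each flipped unit, since spiky hidden units are exactly what would make a single flip catastrophic. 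Multiplying the number of flips (controlled by $\gamma_i C_\delta C_2$) by the per-flip magnitude (controlled by $\gamma_i C_2 C_3$) and propagating through the remaining layers yields the second bracketed factor $\Pi_{i=1}^d(1+\gamma_i C_2 C_3)\big(\Pi_{i=1}^d(1+\gamma_i C_\delta C_2)-1\big)$.

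The main obstacle is this activation-flip term: it is what makes the ReLU analysis genuinely harder than the linear case, and what the three conditions are engineered to tame. The delicate points are (i) showing that the effect of flipped activations composes in a controlled, essentially additive way across layers rather than in a manner that reintroduces exponential-in-depth growth, which relies on $C2$ keeping the flip count at each layer small and on $C3$ preventing any single flipped unit from dominating; and (ii) making the high-probability control uniform over all $\sum_i h_i \le dh$ hidden units simultaneously, which forces the union-bound logarithm inside $C_\delta$ and ties the failure probability to the overall $1-\delta$ guarantee. Once both contributions to the output perturbation are assembled and substituted back through the Lipschitz reduction, the two bracketed factors combine into the stated leading term and the PAC-Bayes KL supplies the trailing square-root term.
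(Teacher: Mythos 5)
Your proposal follows essentially the same route as the paper's proof: the same reduction through the PAC-Bayes bound with per-layer Gaussian perturbations (yielding the $\sum_i \|W_i\|_F^2/\sigma_i^2$ KL term), the same $C_L$-Lipschitz reduction to the expected output perturbation, the same two-term decomposition into a frozen-activation (linear) error controlled via multilinear expansion and condition $C1$ (the paper's Lemma~\ref{lem:linear}), and the same flip-count-times-flip-magnitude analysis of activation changes using $C2$, $C3$, and a union-bound Gaussian tail giving $C_\delta$ (the paper's Lemma~\ref{lem:recursion}, proved by induction over layers as you anticipate). The only cosmetic difference is that the paper controls the Gaussian factors through the expectation bound $\E\|A\eps B\|_F \leq \sigma\|A\|_F\|B\|_F$ rather than a concentration bound on $\|\eps_i\|$, but this yields the same $\gamma_i$ factors and does not change the structure of the argument.
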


\removed{
\begin{lem}\label{lem:sharpness}
Let $\eps_i$ be a random matrix with same dimensions as $W_i$ and each entry distributed according to $\mathcal{N}(0,\sigma_i^2)$ and let $h=\max_{i=1^d}h_i$ and $C_\delta=2\sqrt{\ln(dh/\delta)}$. Then under the conditions $C1, C2, C3$,  if $\sigma_i \leq \frac{d\mu^2 \norm{D_iW_i}_F}{h(1 + C_2(C_3+C_\delta))}$, then with probability $\geq 1-\delta$, 
\begin{equation}
\E_{\eps \sim \mathcal{N}(0,\sigma)^n} [\hatl(f_{\vecw+\eps})] - \hatl(f_\vecw) \leq O\left(\frac{ C_L(1 + C_2(C_3+C_\delta)\sum_{i=1}^d \sigma_i}{m}\sum_{x\in S}\norm{f_\vecw(x)}\right)
\end{equation}
\end{lem}
}

To understand the above generalization error bound,  consider choosing $\gamma_i =\frac{\sigma}{C_{\delta}d}$, and we get a bound that simplifies as follows:
\begin{align*}
 \E_{\eps \sim \mathcal{N}(0,\sigma)^n} [L(f_{\vecw+\eps})] - \hatl(f_\vecw) & \leq O \left(  \sigma \left( 1 + (1+ \sigma C_2 C_3)  C_2 \right) C_L \frac{\sum_x\|f_\vecw(x)\|_F}{m} \right)  \\ & \quad \quad+  \sqrt{\frac{1}{m}\left(\frac{d^2}{ \mu^4}\sum_{i=1}^d  \frac{h_i h_{i-1} }{\sigma^2}+  \ln \frac{2m}{\delta} \right)}
\end{align*}

If we choose large $\sigma$, then the network will have higher expected sharpness but smaller 'norm' and vice versa. Now one can optimize over the choice of $\sigma$ to balance between the terms on the right hand side and get a better capacity bound. For any reasonable choice of $\sigma$, the generalization error above, depends only linearly on depth and does not have any exponential dependence, unlike other notions of generalization. Also the error gets worse with decreasing $\mu$ and increasing $C_2, C_3$ as the sharpness of the network increases which is in accordance with our discussion of the conditions above.

\begin{figure}[t]
\centering
\includegraphics[width=.32\textwidth]{\figdir/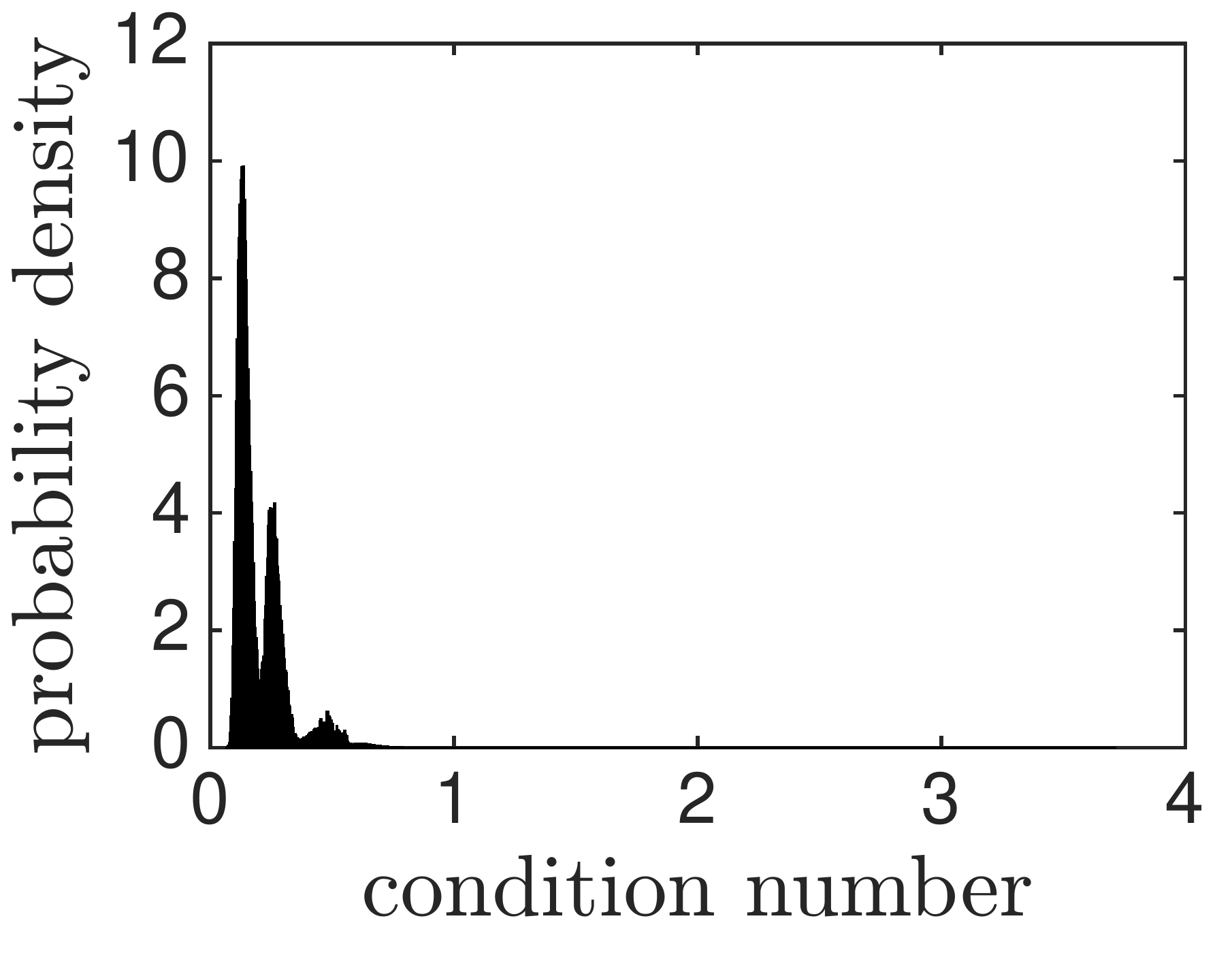}
\includegraphics[width=.32\textwidth]{\figdir/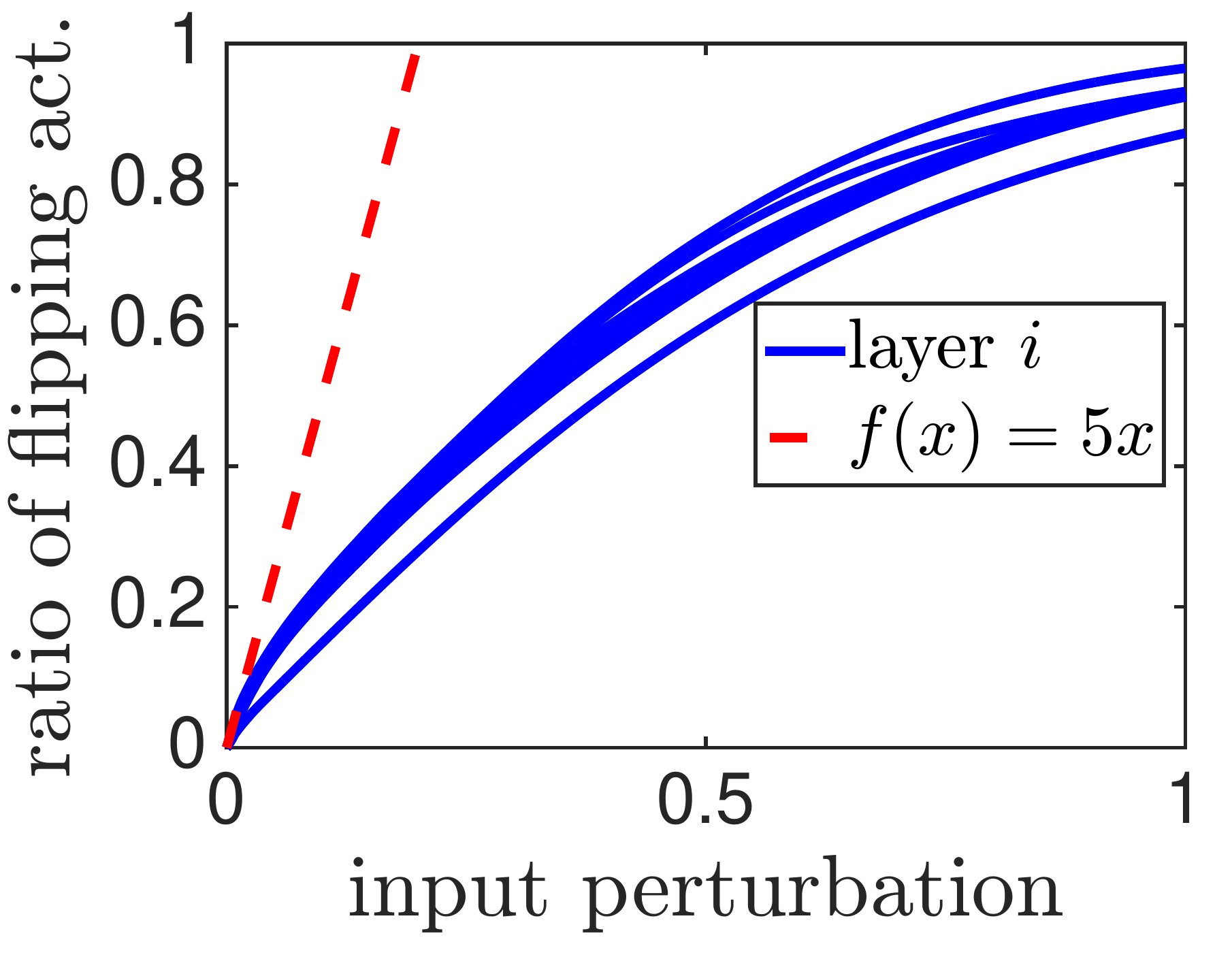}
\includegraphics[width=.32\textwidth]{\figdir/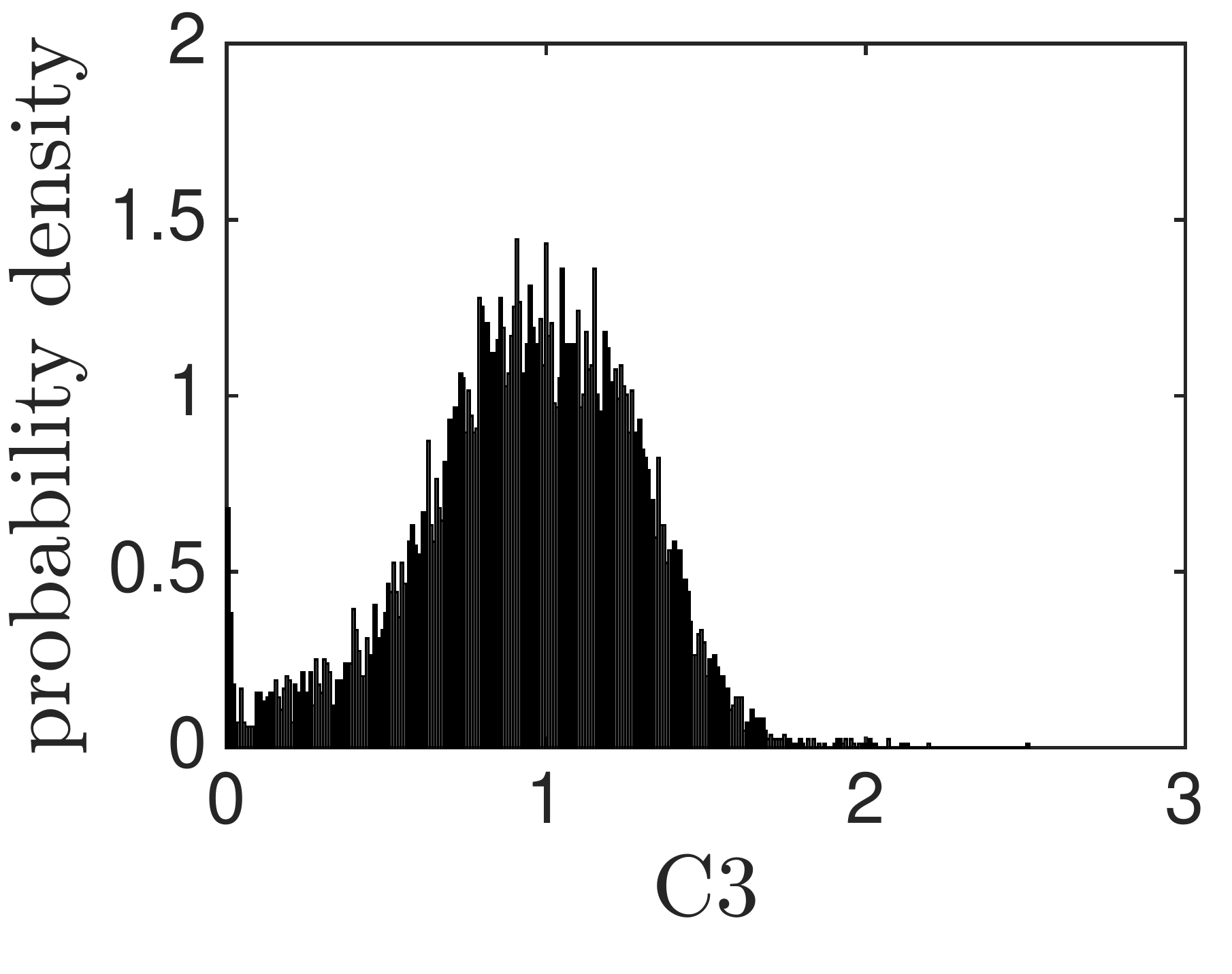}\\
\caption{\small Verifying the conditions of Theorem \ref{thm:relu} on a 10 layer perceptron with 1000 hidden units in each layer, i.e. more than 10,000,000 parameters on MNIST. We have numerically checked that all values are within the displayed range. \textbf{Left}: $C1$: condition number of the network, i.e. $\frac{1}{\mu}$. \textbf{Middle}:  $C2$: the ratio of activations that flip based on magnitude of perturbation. \textbf{Right}:  $C3:$ the ratio of norm of incoming weights to each hidden units with respect to average of the same quantity over hidden units in the layer.}
\label{fig:conditions_verify1}
\end{figure}

Additionally the conditions $C1-C3$ actually hold for networks trained in practice as we verify in Figure~\ref{fig:conditions_verify1}, and our experiments suggest that, $\mu \geq 1/4, C2 \leq 5$ and $C3 \leq 3$. More details on the verification and comparing the conditions on learned network with those of random weights, are presented in the appendix.

\paragraph{Proof of Theorem~\ref{thm:relu}}
We bound the expectation as follows:
\begin{align}
&\E \abs{ \hatl(f_{\vecw+\eps}(x)) -\hatl(f_{\vecw}(x))} \nonumber \\ 
&\quad \quad  \leq C_{L} \E \| f_{\vecw+\eps}(x) -f_\vecw(x)\|_F \nonumber \\
&\quad \quad \stackrel{(i)}{=}  C_{L}\E \| (W+\eps)_d \left(\Pi_{i=1}^{d-1} \hatD_{i} (W+\eps)_i \right)*x - W_d \left(\Pi_{i=1}^{d-1} D_{i}W_i \right)*x \|_F  \nonumber \\
& \quad \quad \leq  C_{L} \E \| (W+\eps)_d \left(\Pi_{i=1}^{d-1} D_{i} (W+\eps)_i \right)*x - W_d \left(\Pi_{i=1}^{d-1} D_{i}W_i \right)*x \|_F \nonumber \\ & \quad \quad \quad \quad + C_{L} \E \| (W+\eps)_d \left(\Pi_{i=1}^{d-1} \hatD_{i} (W+\eps)_i \right)*x - (W+\eps)_d \left(\Pi_{i=1}^{d-1} D_{i} (W+\eps)_i \right)*x \|_F \nonumber \\
&\quad \quad \leq  C_{L} \E \| (W+\eps)_d \left(\Pi_{i=1}^{d-1} D_{i} (W+\eps)_i \right)*x - W_d \left(\Pi_{i=1}^{d-1} D_{i}W_i \right)*x \|_F +C_{L} \E \|\err_d\|_F,  \label{eq:thm_relu1}
\end{align}
where $\err_d =  \| (W+\eps)_d \left(\Pi_{i=1}^{d-1} \hatD_{i} (W+\eps)_i \right)*x - (W+\eps)_d \left(\Pi_{i=1}^{d-1} D_{i} (W+\eps)_i \right)*x \|_F$. $(i)$ $\hatD_i$ is the diagonal matrix with 0's and 1's corresponding to the activation pattern of the perturbed network $f_{\vecw+\eps}(x)$.

The first term in the equation~\eqref{eq:thm_relu1} corresponds to error due to perturbation of a network with unchanged activations (linear network). Intuitively this is small when any subset of successive layers of the network do no interact weakly with each other (not orthogonal to each other). Condition $C1$ captures this intuition and we bound this error in Lemma~\ref{eq:lem_linear1}. 

\begin{lem}\label{lem:linear}
Let $\eps_i$ be a random $h_i \times h_{i-1}$ matrix with each entry distributed according to $\mathcal{N}(0,\sigma_i^2)$. Then, under the condition $C1$,
\begin{multline*}
\E \| (W+\eps)_d \left(\Pi_{i=1}^{d-1} D_{i}(W+\eps)_i \right)*x - W_d \left(\Pi_{i=1}^{d-1} D_{i}W_i \right)*x\|_F \\ \leq  \left(\Pi_{i=1}^d \left(1+ \frac{\sigma_i \sqrt{h_ih_{i-1}}}{\mu^2 \|D_i W_i\|_F} \right) -1 \right)   \|f_\vecw(x)\|_F. 
\end{multline*}
\end{lem}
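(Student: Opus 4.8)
The plan is to expand the fixed-activation (``linear'') perturbed network as a multilinear product, bound each resulting term separately, and use condition $C1$ to prevent the naive exponential-in-depth overcounting. Write $A_i = D_iW_i$ for the activation-masked layer matrices (with $A_d=W_d$ since $D_d=I$), set $A_0=x$ as in $C1$, and write $P_{a:b}=\Pi_{i=a}^{b}D_iW_i$ for the segment products appearing in $C1$, so that $\norm{P_{0:d}}_F=\norm{f_\vecw(x)}_F$. Since the activations are held fixed at $D_i$, perturbing $W_i\mapsto W_i+\eps_i$ replaces $A_i$ by $A_i+D_i\eps_i$. Expanding $\Pi_{i=d}^{1}(A_i+D_i\eps_i)\,x$ and subtracting $\Pi_{i=d}^{1}A_i\,x$ cancels the all-$A$ term and leaves $\sum_{\emptyset\neq S\subseteq[d]}T_S$, where in the term $T_S$ the factors at positions $i\in S$ are $D_i\eps_i$ and the rest are $A_i$. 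Grouping the maximal runs of consecutive $A$-factors into blocks $G_0,\dots,G_{|S|}$ (with $G_0$ the bottom block containing $x$), each $T_S$ has the alternating form (top block)$\,\cdot\,D\eps\,\cdot\,$(block)$\cdots D\eps\,\cdot\,$(bottom block). By the triangle inequality it suffices to show $\E\norm{T_S}_F\le\big(\Pi_{i\in S}\gamma_i\big)\norm{f_\vecw(x)}_F$, since $\sum_{\emptyset\neq S}\Pi_{i\in S}\gamma_i=\Pi_{i=1}^d(1+\gamma_i)-1$.

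The first ingredient bounds a single term. I would peel the perturbation matrices off one at a time from the top, using the tower property: conditioned on all lower layers, the vector $v$ entering a given factor $D_i\eps_i$ is fixed and independent of $\eps_i$, and $\eps_i v$ is a spherical Gaussian in $\R^{h_i}$ with covariance $\sigma_i^2\norm{v}^2 I$. Hence for the deterministic block $M$ sitting immediately above it, Jensen gives $\E\norm{MD_i\eps_i v}_F\le\sigma_i\norm{v}\,\norm{MD_i}_F\le\sigma_i\norm{v}\,\norm{M}_F$. The crucial point is that this picks up only $\sigma_i$ and the Frobenius norm of the adjacent upper block, with \emph{no} spurious factor of $\sqrt{h_{i-1}}$, because a Gaussian matrix applied to a single fixed vector contracts its input dimension into $\norm{v}$. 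Iterating over the independent $\eps_i$, with each perturbation paired to the block directly above it, yields $\E\norm{T_S}_F\le\big(\Pi_{i\in S}\sigma_i\big)\Pi_{j=0}^{|S|}\norm{G_j}_F$.

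The second, and main, ingredient controls the product of block norms via $C1$. I would prove by induction on $|S|$, peeling the topmost perturbed layer $i$, that $\Pi_{j=0}^{|S|}\norm{G_j}_F\le\mu^{-2|S|}\,\Pi_{i\in S}\tfrac{\sqrt{h_ih_{i-1}}}{\norm{A_i}_F}\,\norm{f_\vecw(x)}_F$. The inductive step writes the full product as (top block)$\,A_i\,P_{0:i-1}$ and applies $C1$ twice — cutting at level $i$ to split off the top block, and at level $i-1$ to split off the singleton $A_i$ — giving $\norm{G_{\mathrm{top}}}_F\norm{A_i}_F\norm{P_{0:i-1}}_F\le\mu^{-2}\sqrt{h_ih_{i-1}}\,\norm{f_\vecw(x)}_F$; the factor $\norm{P_{0:i-1}}_F$ then cancels against the inductive bound applied to the lower sub-network (whose top block is exactly the next block down). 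Multiplying the two ingredients gives $\E\norm{T_S}_F\le\Pi_{i\in S}\tfrac{\sigma_i\sqrt{h_ih_{i-1}}}{\mu^2\norm{A_i}_F}\norm{f_\vecw(x)}_F=\Pi_{i\in S}\gamma_i\,\norm{f_\vecw(x)}_F$, and summing over nonempty $S$ completes the proof.

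The step I expect to be the main obstacle is exactly this $C1$-based product inequality: it is where the anti-cancellation hypothesis does its work, turning a product of block norms that could be exponentially larger than $\norm{f_\vecw(x)}_F$ into precisely the claimed factors, which is the source of the bound's merely linear dependence on depth. The delicate points I would handle with care are the boundary layers and degenerate configurations: the top layer has no upper block (so $D_d=I$ and only one cut is available), the bottom layer's block is the input with $\norm{x}=1$ (which is exactly why $C1$ adopts the convention $A_0=x$, $D_0=I$, making the ``cut below layer $1$'' available), and adjacent perturbed layers produce an empty block $G_j=I$ with $\norm{G_j}_F=\sqrt{h}$ and a shared cut. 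In each of these fewer than $2|S|$ cuts are available, so the missing powers of $\mu^{-1}$ are absorbed using $\mu\le 1$, leaving the stated bound intact.
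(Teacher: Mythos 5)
Your proposal is correct and follows essentially the same route as the paper's proof: expand the difference over all nonempty subsets $S$ of perturbed layers, bound each term using the Gaussian product bound (the paper's supporting lemma $\E\|A\eps B\|_F\le\sigma\|A\|_F\|B\|_F$) together with two applications of condition $C1$ per perturbed layer, and sum the factors $\Pi_{i\in S}\gamma_i$ over nonempty $S$ to obtain $\Pi_{i=1}^d(1+\gamma_i)-1$. If anything, your two-ingredient organization (peeling the Gaussians via the tower property, then bounding the product of deterministic block norms by induction on the topmost perturbed layer, with explicit treatment of adjacent and boundary layers) is more careful than the paper's own induction step, which simply asserts the per-layer factor $\sigma_j\sqrt{h_jh_{j-1}}/(\mu^2\|D_jW_j\|_F)$ without addressing these degenerate configurations.
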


The second term in the equation~\eqref{eq:thm_relu1} captures the perturbation error due to change in activations. If a tiny perturbation can cause exponentially many changes in number of active nodes, then that network will have huge sharpness. Condition $C2$ and $C3$ essentially characterize the behavior of sensitivity of activation patterns to perturbations, leading to a bound on this term in Lemma~\ref{lem:recursion}.

\begin{lem} \label{lem:recursion}
Let $\eps_i$ be a random $h_i \times h_{i-1}$ matrix with each entry distributed according to $\mathcal{N}(0,\sigma_i^2)$. Then, under the conditions $C1$, $C2$ and $C3$, with probability $\geq 1-\delta$, for all $ 1 \leq k \leq d$, 
$$\|\hatD_{k} -D_{k}\|_1 \leq  O \left( C_2 h_k  C_{\delta} \sigma_k  \|f^{k-1}_{\vecw}\|_F \right) $$ and 
$$\E\| \err_k\|_F \leq  O \left( \Pi_{i=1}^k \left( 1+\gamma_iC_2 C_3\right)\left(\Pi_{i=1}^k (1+ \gamma_i C_{\delta} C_2) - 1\right)\|f^k_\vecw\|_F \right).$$
where $\gamma_i = \frac{\sigma_i \sqrt{h_i} \sqrt{h_{i-1}}}{\mu^2 \|D_iW_i\|_F}$ and $C_\delta=2\sqrt{\ln(dh/\delta)}$.
\end{lem}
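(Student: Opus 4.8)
}
The plan is to prove both statements together by induction on the layer index $k$, with the flip-count bound $\norm{\hatD_k-D_k}_1$ feeding into the recursive bound on $\E\norm{\err_k}_F$ (where $\err_k$ is the layer-$k$ analogue of the quantity $\err_d$ appearing in the proof of Theorem~\ref{thm:relu}). First I would fix the input $x$ and control, node by node, the perturbation to the pre-activations at layer $k$. The pre-activation of the $i$-th node is $(W+\eps)_{k,i}$ applied to the perturbed layer-$(k-1)$ representation; since each entry of $\eps_k$ is $\mathcal{N}(0,\sigma_k^2)$, the induced perturbation of this scalar pre-activation is Gaussian with standard deviation of order $\sigma_k\norm{f^{k-1}_\vecw}_F$. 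A union bound over all $dh$ nodes --- which is precisely where the factor $C_\delta=2\sqrt{\ln(dh/\delta)}$ and the event of probability $\ge 1-\delta$ enter --- shows that with high probability no pre-activation moves by more than $C_\delta\sigma_k\norm{f^{k-1}_\vecw}_F$. A ReLU node can flip its state only if its clean pre-activation lies within this band around zero, and condition $C2$ bounds the fraction of such near-boundary nodes by $C_2$ times the band width. Multiplying by $h_k$ yields the first claim $\norm{\hatD_k-D_k}_1 = O(C_2 h_k C_\delta \sigma_k \norm{f^{k-1}_\vecw}_F)$.

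For the recursion I would introduce $\hat h^k$ and $\tilde h^k$ for the post-activation vectors at layer $k$ under, respectively, the perturbed and the unperturbed activation patterns (both with perturbed weights), and write
\[
\hat h^k - \tilde h^k = \hatD_k(W+\eps)_k\bigl(\hat h^{k-1}-\tilde h^{k-1}\bigr) + (\hatD_k-D_k)(W+\eps)_k\tilde h^{k-1}.
\]
The first summand propagates the previous error through layer $k$; bounding its operator action and using $C3$ to control the per-node weight magnitudes contributes the multiplicative factor $(1+\gamma_k C_2 C_3)$. The second summand is the freshly introduced flip error: it is supported on exactly the flipped coordinates counted above, so bounding each such coordinate by the per-node weight norm (controlled by $C3$) times the input magnitude, and then multiplying by the flip count (controlled by $C2$ and $C_\delta$), contributes an additive term of order $\gamma_k C_\delta C_2 \norm{f^k_\vecw}_F$. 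After normalizing by $\norm{f^k_\vecw}_F$ this gives a linear recursion of the form $u_k \lesssim (1+\gamma_k C_2 C_3)\,u_{k-1} + \gamma_k C_\delta C_2$ with $u_k = \E\norm{\err_k}_F/\norm{f^k_\vecw}_F$ and $u_0=0$.

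Finally I would unroll this recursion to $u_k \lesssim \sum_{j=1}^{k}\gamma_j C_\delta C_2\,\Pi_{i=j+1}^{k}(1+\gamma_i C_2 C_3)$ and over-bound it using $\Pi_{i=j+1}^{k}(1+\gamma_i C_2 C_3)\le \Pi_{i=1}^{k}(1+\gamma_i C_2 C_3)$ together with $\sum_{j=1}^{k}\gamma_j C_\delta C_2 \le \Pi_{i=1}^{k}(1+\gamma_i C_\delta C_2)-1$, which recovers exactly the claimed product form $\Pi_{i=1}^{k}(1+\gamma_i C_2 C_3)\bigl(\Pi_{i=1}^{k}(1+\gamma_i C_\delta C_2)-1\bigr)$.

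The main obstacle will be the second summand, the freshly introduced flip error, and the bookkeeping around it: obtaining the clean per-layer rate $\gamma_k C_\delta C_2$ requires simultaneously invoking the flip count from the first claim, the per-node weight control from $C3$, and the $C1$-based norm estimates of Lemma~\ref{lem:linear} needed to relate the perturbed input magnitude $\norm{\tilde h^{k-1}}_F$ back to the clean $\norm{f^{k-1}_\vecw}_F$. A subtler difficulty is reconciling the two probabilistic modes --- the flip-count bound holds on a high-probability event, whereas the statement bounds an expectation --- so I would condition on the good event and argue that the contribution of its complement to $\E\norm{\err_k}_F$ is negligible, the Gaussian tails decaying fast enough that the $\ln(dh/\delta)$ inside $C_\delta$ absorbs it.
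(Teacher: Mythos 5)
Your proposal follows essentially the same route as the paper's proof: induction over layers, a high-probability band of width $C_\delta\sigma_k$ combined with condition $C2$ to count flipped activations, the identical decomposition of the layer-$k$ error into a propagated term $\hatD_k(W+\eps)_k(\hat h^{k-1}-\tilde h^{k-1})$ and a fresh-flip term $(\hatD_k-D_k)(W+\eps)_k\tilde h^{k-1}$ controlled via $C3$ and the $C1$-based estimates of Lemma~\ref{lem:linear}, and the same handling of the expectation-versus-high-probability issue by conditioning on the good event. The only cosmetic difference is that you unroll an explicit linear recursion and over-bound the sum by the product form, whereas the paper reaches the same product bound by enumerating replacements of $W_i$ with $\eps_i$ as in Lemma~\ref{lem:linear}; both are valid and of comparable rigor.
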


Hence, from Lemma~\ref{eq:lem_linear1} and Lemma~\ref{lem:recursion} we get, 
\begin{small}
\begin{align*}
&\E \abs{ \hatl(f_{\vecw+\eps}(x)) -\hatl(f_{\vecw}(x))} \nonumber \\ 
&\leq \left[\Pi_{i=1}^d \left(1+ \gamma_i \right) -1  + \Pi_{i=1}^d \left( 1+\gamma_i C_2 C_3 \right)\left(\Pi_{i=1}^d (1+ \gamma_i C_{\delta} C_2  )- 1\right) \right] C_{L} \|f_\vecw(x)\|_F.
\end{align*}

Here $\gamma_i =\frac{\sigma_i\sqrt{h_i}\sqrt{h_{i-1}} }{\mu^2 \|D_iW_i\|_F } $. Substituting the above bound on expected sharpness in the PAC-Bayes result (equation~\eqref{eq:pacbayes}), gives the result.
\end{small}
 
\section{Conclusion}
Learning with deep neural networks displays good generalization
behavior in practice, a phenomenon that remains largely unexplained.
In this paper we discussed different candidate complexity measures
that might explain generalization in neural networks.  We outline a
concrete methodology for investigating such measures, and report on
experiments studying how well the measures explain different
phenomena.  While there is no clear choice yet, some combination of
expected sharpness and norms do seem to capture much of the
generalization behavior of neural networks.  A major issue still left
unresolved is how the choice of optimization algorithm biases such
complexity to be low, and what is the precise relationship between
optimization and implicit regularization.

\bibliographystyle{abbrvnat}
\bibliography{ref}

\appendix


\section{Experiments Settings}
In experiment with different network sizes, we train a two layer perceptron with ReLU activation and varying number of hidden units without Batch Normalization or dropout. In the rest of the experiments, we train a modified version of the VGG architecture \cite{simonyan2014very} with the configuration $2\times [64,3,3,1]$, $2\times [128,3,3,1]$, $2\times [256,3,3,1]$, $2\times [512,3,3,1]$ where we add Batch Normalization before ReLU activations and apply $2\times 2$ max-pooling with window size 2 and dropout after each stack. Convolutional layers are followed by $4\times 4$ average pooling, a fully connected layer with 512 hidden units and finally a linear layer is added for prediction. 

In all experiments we train the networks using stochastic gradient descent (SGD) with mini-batch size 64, fixed learning rate 0.01 and momentum 0.9 without weight decay. In all experiments where achieving zero training error is possible, we continue training until the cross-entropy loss is less than $10^{-4}$.

When calculating norms on a network with a Batch Normalization layer, we reparametrize the network to one that represents the exact same function without Batch Normalization as suggested in \cite{neyshabur16}. In all our figures we plot norm divided by margin to avoid scaling issues (see Section~\ref{sec:summary}), where we set the margin over training set $S$ to be $5^{th}$-percentile of the margins of the data points in $S$, i.e. $\text{Prc}_5\left\{f_\vecw(x_i)[y_i] - \max_{y\neq y_i} f_\vecw(x)[y] | (x_i,y_i)\in S\right\}.$ We have also investigated other versions of the margin and observed similar behavior to this notion. 

We calculate the sharpness, as suggested in \cite{keskar2016large} - for each parameter $w_i$ we bound the magnitude of perturbation by $\alpha(\abs{w_i}+1)$ for $\alpha=5.10^{-4}$. In order to compute the maximum perturbation (maximize the loss), we perform 2000 updates of stochastic gradient ascent starting from the minimum, with mini-batch size 64, fixed step size 0.01 and momentum 0.9.

To compute the expected sharpness, we perturb each parameter $w_i$ of the model with noise generated from Gaussian distribution with zero mean and standard deviation, $\alpha(10\abs{w_i}+1)$. The expected sharpness is average over 1000 random perturbations each of which are averaged over a mini-batch of size 64. We compute the expected sharpness for different choices of $\alpha$. For each value of $\alpha$ the KL divergence can be calculated as $\frac{1}{\alpha^2}\sum_i \left(\frac{w_i}{(10\abs{w_i}+1)}\right)^2$.

\section{Proofs}

\subsection{Proof of Lemma~\ref{lem:linear}}
\begin{proof}
Define $g_{\vecw,\eps,s}(x)$ as the network $f_\vecw$ with weight $W_i$ in every layer $i\in s$ replaced by $\eps_i$. Hence,
\begin{align}
 \| (W+\eps)_d &\left(\Pi_{i=1}^{d-1} D_{i}(W+\eps)_i \right)*x - W_d \left(\Pi_{i=1}^{d-1} D_{i}W_i \right)*x\|_F \nonumber \\
& \leq  \|\sum_i g_{\vecw,\eps,\{i\}}(x)\|_F +\|\sum_{i,j} g_{\vecw,\eps,\{i,j\}}(x)\|_F + \cdots + \| f_{\eps}(x)\|_F  \label{eq:lem_linear1}
\end{align}

\noindent {\bf Base case:} First we show the bound for terms with one noisy layer. Let $g_{\vecw,\eps,\{k\}}(x)$ denote $f_\vecw(x)$ with weights in layer $k$, $W_k$ replaced by $\eps_k$. Now notice that,
\begin{align*}
\E \|g_{\vecw,\eps,\{k\}}(x)\|_F &= \E \|W_d \Pi_{i=k+1}^{d-1} D_{i}W_i* D_k \eps_k * \left(\Pi_{i=1}^{k-1} D_{i}W_i \right)*x \|_F \\
&\stackrel{(i)}{\leq}  \sigma_k   \|W_d \Pi_{i=k+1}^{d-1} D_{i}W_i\|_F  \|  \|\left(\Pi_{i=1}^{k-1} D_{i}W_i \right)*x \|_F \\
&\stackrel{(ii)}{\leq}  \sigma_k \frac{\sqrt{h_k h_{k-1}}}{\mu^2 \|D_k W_k\|_F} \|W_d \left(\Pi_{i=1}^{d-1} D_{i}W_i \right)*x \|_F \\
&=\sigma_k \frac{\sqrt{h_k h_{k-1} }}{\mu^2 \|D_k W_k\|_F} \| f_\vecw(x)\|_F.
\end{align*}
$(i)$ follows from Lemma~\ref{lem:gauss_product}. $(ii)$ follows from condition $C1$.

\noindent {\bf Induction step:} Let for any set $s \subset [d], |s| =k$, the following holds:   $$\E \|g_{\vecw,\eps,s}(x)\|_F \leq  \|f_\vecw(x)\|_F \Pi_{i \in s} \sigma_i \frac{\sqrt{h_i h_{i-1}}}{\mu^2\| D_i W_i\|_F} . $$

We will prove this now for terms with $k+1$ noisy layers. 
\begin{align*}
\E \|g_{\vecw,\eps,s \cup \{j\}},x)\|_F &\leq  \sigma_j \frac{\sqrt{h_j h_{j-1}}}{\mu^2 \|D_j W_j\|} \E \|g_{\vecw,\eps,s}(x)\|_F \\
&\leq  \sigma_j \frac{\sqrt{h_j h_{j-1}}}{\mu^2\|D_j W_j\|}    \|f_\vecw(x)\|_F \Pi_{i \in s} \sigma_i \frac{\sqrt{h_i h_{i-1}}}{\mu^2 \| D_i W_i\|_F} \\
&= \|f_\vecw(x)\|_F \Pi_{i \in s \cup \{j\}} \sigma_i \frac{\sqrt{h_i h_{i-1}}}{\mu^2 \| D_i W_i\|_F}
\end{align*}

Substituting the above expression in equation~\eqref{eq:lem_linear1} gives,
 \begin{multline*}
 \| (W+\eps)_d \left(\Pi_{i=1}^{d-1} D_{i}(W+\eps)_i \right)*x - W_d \left(\Pi_{i=1}^{d-1} D_{i}W_i \right)*x\|_F  \\ \leq  \left(\Pi_{i=1}^d \left(1+ \frac{\sigma_i \sqrt{h_ih_{i-1}}}{\mu^2 \|D_i W_i\|_F} \right) -1 \right)  \|f_\vecw(x)\|_F.
\end{multline*}

\end{proof}

\subsection{Proof of Lemma~\ref{lem:recursion}}
\begin{proof}
We prove this lemma by induction on $k$. Recall that $\hatD_i$ is the diagonal matrix with 0's and 1's corresponding to the activation pattern of the perturbed network $f_{\vecw+\eps}(x)$. Let $C_\delta=2\sqrt{\ln(dh/\delta)}$ and  $1_{E}$ denote the indicator function, that is $1$ if the event $E$ is true, $0$ else. We also use $f_\vecw^k(x)$ to denote the network truncated to level $k$, in particular $f_\vecw^k(x) =\Pi_{i=1}^k D_k W_k x$.

\noindent {\bf Base case:}

\begin{multline*}
\|\hatD_1 - D_1\|_1 = \sum_i 1_{\inner{(W+\eps)_{1,i}}{x}*\inner{W_{1,i}}{x} <0} =\sum_i 1_{\inner{(\vecW)_{1,i}}{x}^2 < -\inner{(\eps)_{1,i}}{x}*\inner{(\vecW)_{1,i}}{x}} \\ \leq \sum_i 1_{\abs{\inner{(\vecW)_{1,i}}{x}} < \abs{\inner{(\eps)_{1,i}}{x}}}.
\end{multline*}
Since $\eps_1$ is a random Gaussian matrix, and $\|x\| \leq 1$, for any $i$, $\abs{\inner{(\eps)_{1,i}}{x}} \leq 2\sigma_1\sqrt{\ln(dh/\delta)}=\sigma_1C_\delta$ with probability greater than $1-\frac{\delta}{d}$. Hence, with probability $\geq 1-\frac{\delta}{d}$,

\begin{align*}
\|\hatD_1 - D_1\|_1 \leq \sum_i 1_{\abs{\inner{(\vecW)_{1,i}}{x}} \leq \sigma_1C_\delta}  \leq C_2 h_1\sigma_1C_\delta.
\end{align*}

This completes the base case for $k=1$. $\hatD_1$ is a random variable that depends on $\eps_1$. Hence, in the remainder of the proof, to avoid this dependence, we separately bound $\hatD_1 -D$ using the expression above and compute expectation only with respect to $\eps_1$. With probability $\geq 1-\frac{\delta}{d}$,

\begin{align*}
\E \| \err_1 \|_F &= \E \|\hatD_1*(W+\eps)_1 x - D_1*(W+\eps)_1 x \|_F \\
&\leq \E \|(\hatD_1- D_1)*W_1 x \|_F + \E \|(\hatD_1- D_1)*\eps_1 x \|_F \\
& \stackrel{(i)}{\leq} \sqrt{C_2 h_1\sigma_1C_\delta} \sigma_1 + \sqrt{C_2 h_1\sigma_1C_\delta} \sigma_1\\
& = 2 \sqrt{C_2 h_1\sigma_1C_\delta} \sigma_1.
\end{align*}
$(i)$ follows because, each hidden node in  $\E \|(\hatD_1- D_1)*W_1 x \|_F$ has norm less than $\sigma_1 C_{\delta}$ (as it changed its activation), number of such units is less than $C_2 h_1 \sigma_1 C_{\delta}$.

$k=1$ case does not capture all the intricacies and dependencies of higher layer networks. Hence we also evaluate the bounds for $k=2$.

\begin{align*}
\|\hatD_2 - D_2\|_1 \leq \sum_i 1_{\inner{(W+\eps)_{2,i}}{f^1_{\vecw+\eps}}*\inner{W_{2,i}}{f^1_\vecw} \leq 0 } \leq \sum_i 1_{\abs{\inner{W_{2,i}}{f^1_{\vecw}}} \leq \abs{\inner{\eps_{2,i}}{f^1_{\vecw+\eps}}} + \abs{\inner{W_{2,i}}{f^1_{\vecw+\eps}-f^1_{\vecw}}} }
\end{align*}

Now, with probability $\geq 1-\frac{2\delta}{d}$ we get:

\begin{align*}
&\abs{\inner{\eps_{2,i}}{f^1_{\vecw+\eps}}} + \abs{\inner{W_{2,i}}{f^1_{\vecw+\eps}-f^1_{\vecw}}} \\ &\leq C_{\delta} \sigma_2 \left( \|f^1_\vecw\|_F+ 2 \sqrt{C_2 h_1 \sigma_1 C_{\delta}} \sigma_1 \right) + \|W_{2, i}\| 2 \sqrt{C_2 h_1 \sigma_1 C_{\delta}} \sigma_1 \\
&\leq  C_{\delta} \sigma_2 \left( \|f^1_\vecw\|_F + 2 \sqrt{C_2 h_1 \sigma_1 C_{\delta}} \sigma_1   \right) + C_3 \frac{\|D_2 W_2\|_F}{\sqrt{h_2}} 2 \sqrt{C_2 h_1 \sigma_1 C_{\delta}} \sigma_1   \\
& \stackrel{(i)}{\leq} C_{\delta} \sigma_2 \left(\|f^1_\vecw\|_F  + 2 \sqrt{\frac{\hat{\sigma_1}}{\sqrt{h_i + h_{i-1}}}} \hat{\sigma_1} \right) + 2 \hat{\sigma_1}  \frac{C_3  \|f_\vecw(x)\|_F^{\nicefrac{1}{d}}}{\mu} \sqrt{\frac{\hat{\sigma_1}}{\sqrt{h_i + h_{i-1}}}}\\
&=C_{\delta} \sigma_2 \left( \|f^1_\vecw\|_F + \beta_1 \hat{\sigma_1} \right) +  \frac{C_3  \|f_\vecw(x)\|_F^{\nicefrac{1}{d}}}{\mu} \beta_1 \hat{\sigma_1} 
\end{align*}
where, $\beta_i = 2 \sqrt{\frac{\hat{\sigma_1}}{\sqrt{h_i + h_{i-1}}}}$. $(i)$ follows from condition $C1$, which results in $\Pi_{i=2}^d \frac{\mu \|D_i W_i\|_F}{\sqrt{h_i}} \frac{\mu \|D_1 W_1 x\|_F}{\sqrt{h_1}} \leq \|f_\vecw(x)\|_F$. Hence, if we consider the rebalanced network\footnote{The parameters of ReLu networks can be scaled between layers without changing the function} where all layers have same values for $\frac{\mu \|D_i W_i\|_F}{\sqrt{h_i}} $, we get, $\frac{\mu \|D_i W_i\|_F}{\sqrt{h_i}} \leq \|f_\vecw(x)\|_F^{\nicefrac{1}{d}}$. Also the above equations follow from setting, $\sigma_i =\frac{\hat{\sigma}_i}{C_2 C_{\delta}\sqrt{h_i + h_{i-1}}}$. 

Hence, with probability $\geq 1-\frac{2\delta}{d}$,
\begin{align*}
\|\hatD_2 - D_2\|_1 &\leq C_2*h_2 \left( C_{\delta} \sigma_2 \left( \|f^1_\vecw\|_F + \beta_1 \hat{\sigma_1} \right) +  \frac{C_3  \|f_\vecw(x)\|_F^{\nicefrac{1}{d}}}{\mu}\beta_1 \hat{\sigma_1} \right).
\end{align*}

Since, we choose $\sigma_i$ to scale as some small number $O(\sigma)$, in the above expression the first term scales as $O(\sigma)$ and the last two terms decay at least as $O(\sigma^{\nicefrac{3}{2}})$. Hence we do not include them in the computation of $\err$.
 
\begin{align*}
\E \| \err_2 \|_F &= \E \|\hatD_2(W+\eps)_2 *\hatD_1*(W+\eps)_1 x - D_2(W+\eps)_2 *D_1*(W+\eps)_1 x \|_F \\
&\leq \E \| (\hatD_2-D_2)(W+\eps)_2 *(\hatD_1-D_1)*(W+\eps)_1 x\|_F  + \E \| D_2(W+\eps)_2 *(\hatD_1-D_1)*(W+\eps)_1 x\|_F \\ & \quad \quad+ \E \| (\hatD_2-D_2)(W+\eps)_2 *D_1*(W+\eps)_1 x\|_F.
\end{align*}

We will bound now the first term in the above expression. With probability $\geq 1-\frac{2\delta}{d}$,

\begin{align*}
\E & \| (\hatD_2-D_2)(W+\eps)_2 *(\hatD_1-D_1)*(W+\eps)_1 x\|_F \\
&\leq \E \| (\hatD_2-D_2)W_2 *(\hatD_1-D_1)*W_1 x\|_F + \E \| (\hatD_2-D_2)W_2 *(\hatD_1-D_1)*\eps_1 x\|_F \\ & \quad \quad + \E \| (\hatD_2-D_2)\eps_2 *(\hatD_1-D_1)*W_1 x\|_F + \E \| (\hatD_2-D_2)\eps_2 *(\hatD_1-D_1)*\eps_1 x\|_F \\
&\leq  2 \sqrt{C_2*h_2 C_{\delta} \sigma_2  \|f^1_W\|_F }C_{\delta} \sigma_2  \|f^1_W\|_F \sqrt{C_2*h_1 *C_{\delta} \sigma_1}C_{\delta} \sigma_1 \\ & \quad \quad + 2 \sqrt{C_2*h_2 C_{\delta} \sigma_2  \|f^1_W\|_F }C_{\delta} \sigma_2  \sqrt{h_1} \sqrt{C_2*h_1 *C_{\delta} \sigma_1}C_{\delta} \sigma_1 +O(\sigma^2)\\
&\leq  4 \|f_\vecw^2\|_F \frac{C^2_{\delta} \sigma_2 \sigma_1 \sqrt{h_1}}{\mu \|D_2 W_2\|_F} \Pi_{i=1}^2 \sqrt{C_2 h_i C_{\delta} \sigma_i}.
\end{align*}


\noindent {\bf Induction step:}

Now we assume the statement for all $i \leq k$ and prove it for $k+1$.
$\|\hatD_{k} -D_{k}\|_1 \leq O \left( C_2 h_k  C_{\delta} \sigma_k  \|f^{k-1}_\vecw\|_F \right) $ and $\E\| \err_k\|_F \leq   O \left( \Pi_{i=1}^k \left( 1+\frac{\sigma_i\sqrt{h_i}\sqrt{h_{i-1}} C_2 C_3 }{\mu^2 \|W_i\|_F  }\right)\left(\Pi_{i=1}^k (1+ \frac{\sigma_i\sqrt{h_i}\sqrt{h_{i-1}} C_{\delta} C_2}{\mu^2 \|W_i\|_F } ) - 1\right)\|f^k_\vecw\|_F \right)$. Now we prove the statement for $k+1$.
 
\begin{align*}
\|\hatD_{k+1} - D_{k+1}\|_1 &= \sum_i 1_{\inner{(W+\eps)_{k+1 , i}}{ \Pi_{i=1}^{k } \hatD_i (W+\eps)_i *x } *\inner{W_{2,i}}{D_1 W_1 x } \leq 0} \\
&\leq \sum_i 1_{ \abs{ \inner{W_{k+1 , i}}{ \Pi_{i=1}^{k } \hatD_i (W+\eps)_i *x }} \leq \abs{ \inner{\eps_{k+1 , i}}{ \Pi_{i=1}^{k } \hatD_i (W+\eps)_i *x }} } \\
&=  \sum_i 1_{ \abs{ \inner{W_{k+1 , i}}{ f^k_{\vecw+\eps} }} \leq \abs{ \inner{\eps_{k+1 , i}}{ f^k_{\vecw+\eps} }}} \\
&\leq   \sum_i 1_{ \abs{ \inner{W_{k+1 , i}}{ f^k_{W} }} \leq \abs{ \inner{\eps_{k+1 , i}}{ f^k_{\vecw} }} + \abs{ \inner{\eps_{k+1 , i}}{ f^k_{\vecw +\eps} -f^k_\vecw }} + \abs{ \inner{W_{k+1 , i}}{ f^k_{\vecw+\eps} -f^k_{\vecw} }} } 
\end{align*}

Hence, with  probability $\geq 1-\frac{k\delta}{d}$, 
\begin{align*}
\|\hatD_{k+1} - D_{k+1}\|_1 &\leq  C_2 h_{k+1} \left[ C_{\delta} \sigma_{k+1} ( \|f^k_{\vecw} \|_F +\|f^k_{\vecw+\eps} -f^k_\vecw \|_F ) + \| W_{k+1, i}\|  \|f^k_{\vecw+\eps} -f^k_\vecw \|_F \right] \\
&\leq C_2 h_{k+1}C_{\delta} \sigma_{k+1} \|f^k_{\vecw} \|_F +C_2 h_{k+1}C_{\delta} \sigma_{k+1}\|f^k_{\vecw+\eps} -f^k_\vecw \|_F  +C_2 h_{k+1} \| W_{k+1, i}\|  \|f^k_{\vecw+\eps} -f^k_\vecw \|_F . 
\end{align*}
Now we will show that the last two terms in the above expression scale as $O(\sigma^2)$. For that, first notice that $\|f^k_{\vecw+\eps} -f^k_\vecw \|_F \leq  \left(\Pi_{i=1}^k \left(1+ \frac{\sigma_i \sqrt{h_i h_{i-1}}}{\mu^2 \|D_i W_i\|_F} \right) -1 \right)  \|f_\vecw(x)\|_F + \err_k$, from lemma~\ref{lem:linear}. Note that the second term in the above expression clearly scale as $O(\sigma^2)$.

Hence,
\begin{align*}
\|\hatD_{k+1} - D_{k+1}\|_1 \leq O \left( C_2 h_{k+1}C_{\delta} \sigma_{k+1} \|f^k_{\vecw} \|_F \right).
\end{align*}


\begin{align*}
\|\err_{k+1} \| &= \|f^{k+1}_{\vecw+\eps} -\tilde{f}^{k+1}_{\vecw+\eps}\|_F \\ &= \| \hatD_{k+1}(W+\eps)_{k+1} \Pi_{i=1}^{k+1} \hatD_i (W+\eps)_i x - D_{k+1}(W+\eps)_{k+1} \Pi_{i=1}^{k+1} D_i (W+\eps)_i x\|_F \\
&\leq \| (\hatD_{k+1}-D_{k+1})(W+\eps)_{k+1} \Pi_{i=1}^{k+1} D_i (W+\eps)_i x\|_F + \| \hatD_{k+1} (W+\eps)_{k+1} \err_k\|_F \\
&\leq \| (\hatD_{k+1}-D_{k+1})(W+\eps)_{k+1} \Pi_{i=1}^{k+1} D_i (W+\eps)_i x\|_F + \| (\hatD_{k+1}-D_{k+1}) (W+\eps)_{k+1} \err_k\|_F \\ & \quad \quad +\|D_{k+1} (W+\eps)_{k+1} \err_k\|_F 
\end{align*}

Substituting the bounds for $\hatD_{k+1} -D_{k+1}$ and $\err_k$ gives us, with probability $\geq 1-\frac{k \delta}{d}$.
\begin{align*}
\E \|\err_{k+1} \| &\leq  \sqrt{C_2 h_{k+1}C_{\delta} \sigma_{k+1} \|f^k_{\vecw} \|_F } C_{\delta} \sigma_{k+1} \|f^k_{W} \|_F \E \| \Pi_{i=1}^{k+1} D_i (W+\eps)_i x\|_F \\
&\quad \quad + \E \|\err_k\|_F \left( \sqrt{C_2 h_{k+1}C_{\delta} \sigma_{k+1} \|f^k_{\vecw} \|_F } C_{\delta} \sigma_{k+1} \|f^k_{\vecw} \|_F + \|D_{k+1} W_{k+1}\|_F +\sigma_{k+1}\sqrt{h_{k+1}} \right)
\end{align*}
Now we bound the above terms following the same approach as in proof of Lemma~\ref{lem:linear}, by considering all possible replacements of $W_i$ with $\eps_i$. That gives us the result.


%

\end{proof}

\section{Supporting results}

\begin{lem}\label{lem:gauss_product}
Let $A$ ,$B$ be $n_1 \times n_2$ and $n_3 \times n_4$ matrices and $\eps$ be a $n_2\times n_3$ entrywise random Gaussian matrix with $\eps_{ij} \sim \mathcal{N}(0,\sigma)$. Then,
$$\E \left[\| A*\eps*B\|_F \right] \leq \sigma \|A\|_F  \|B\|_F .$$
\end{lem}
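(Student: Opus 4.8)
The plan is to control the first moment $\E[\norm{A\eps B}_F]$ through the second moment, which is exactly computable because the entries of $\eps$ are independent and mean zero. First I would apply Jensen's inequality (concavity of $\sqrt{\cdot}$), giving $\E[\norm{A\eps B}_F] \le \sqrt{\E[\norm{A\eps B}_F^2]}$, so it suffices to show $\E[\norm{A\eps B}_F^2] \le \sigma^2 \norm{A}_F^2 \norm{B}_F^2$ (in fact this will hold with equality).

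Next I would expand the squared Frobenius norm entrywise. Writing $(A\eps B)_{i\ell} = \sum_{j,k} A_{ij}\,\eps_{jk}\,B_{k\ell}$, we have $\norm{A\eps B}_F^2 = \sum_{i,\ell} \big(\sum_{j,k} A_{ij}\eps_{jk}B_{k\ell}\big)^2$. Taking the expectation and expanding the square produces terms $A_{ij}B_{k\ell}A_{ij'}B_{k'\ell}\,\E[\eps_{jk}\eps_{j'k'}]$. Since the entries are independent and mean zero, $\E[\eps_{jk}\eps_{j'k'}] = \sigma^2$ when $(j,k)=(j',k')$ and $0$ otherwise, so every cross term vanishes and only the diagonal survives.

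This leaves $\E[\norm{A\eps B}_F^2] = \sigma^2 \sum_{i,\ell}\sum_{j,k} A_{ij}^2 B_{k\ell}^2$, which factorizes as $\sigma^2 \big(\sum_{i,j}A_{ij}^2\big)\big(\sum_{k,\ell}B_{k\ell}^2\big) = \sigma^2\norm{A}_F^2\norm{B}_F^2$. Substituting into the Jensen bound gives the claim. There is no serious obstacle here; the only step requiring any care is confirming that the off-diagonal terms in the variance computation vanish, which is immediate from independence and zero mean of the entries. Note that only second moments enter, so the Gaussian assumption could in fact be relaxed to any entrywise i.i.d.\ mean-zero, variance-$\sigma^2$ distribution.
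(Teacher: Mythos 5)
Your proposal is correct and follows essentially the same route as the paper's own proof: Jensen's inequality to pass to the second moment, followed by an entrywise expansion in which independence and zero mean kill the cross terms, leaving exactly $\sigma^2\norm{A}_F^2\norm{B}_F^2$. Your closing remark that the Gaussian assumption can be relaxed to any i.i.d.\ mean-zero, variance-$\sigma^2$ entries is a valid observation, though not needed for the paper's purposes.
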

\begin{proof}
By Jensen's inequality, 
\begin{align*}
\E \left[\| A*\eps*B\|_F \right]^2 &\leq \E \left[\| A*\eps*B\|_F^2 \right] \\
&= \E \left[\left(\sum_{ij} \sum_{kl} A_{ik} \eps_{kl} B_{lj} \right)^2 \right]\\
&= \sum_{ij} \sum_{kl} A_{ik}^2 \E \left[\eps_{kl}^2\right] B_{lj}^2 \\
&= \sigma^2 \|A\|_F^2 \| B\|_F^2.
\end{align*}

\end{proof}


\section{Conditions in Theorem ~\ref{thm:relu}}
In this section, we compare the conditions in Theorem ~\ref{thm:relu} of a learned network with that of its random initialization. We trained a 10-layer feedforward network with 1000 hidden units
in each layer on MNIST dataset. Figures~\ref{fig:conditions_verify2}, \ref{fig:conditions_verify3} and \ref{fig:conditions_verify4} compare condition $C1$, $C2$ and $C3$ on learned weights to that of random initialization respectively. Interestingly,
we observe that the network with learned weights is very similar to its random initialization in terms of these conditions.
\begin{figure}[t]
\centering
\includegraphics[width=.32\textwidth]{\figdir/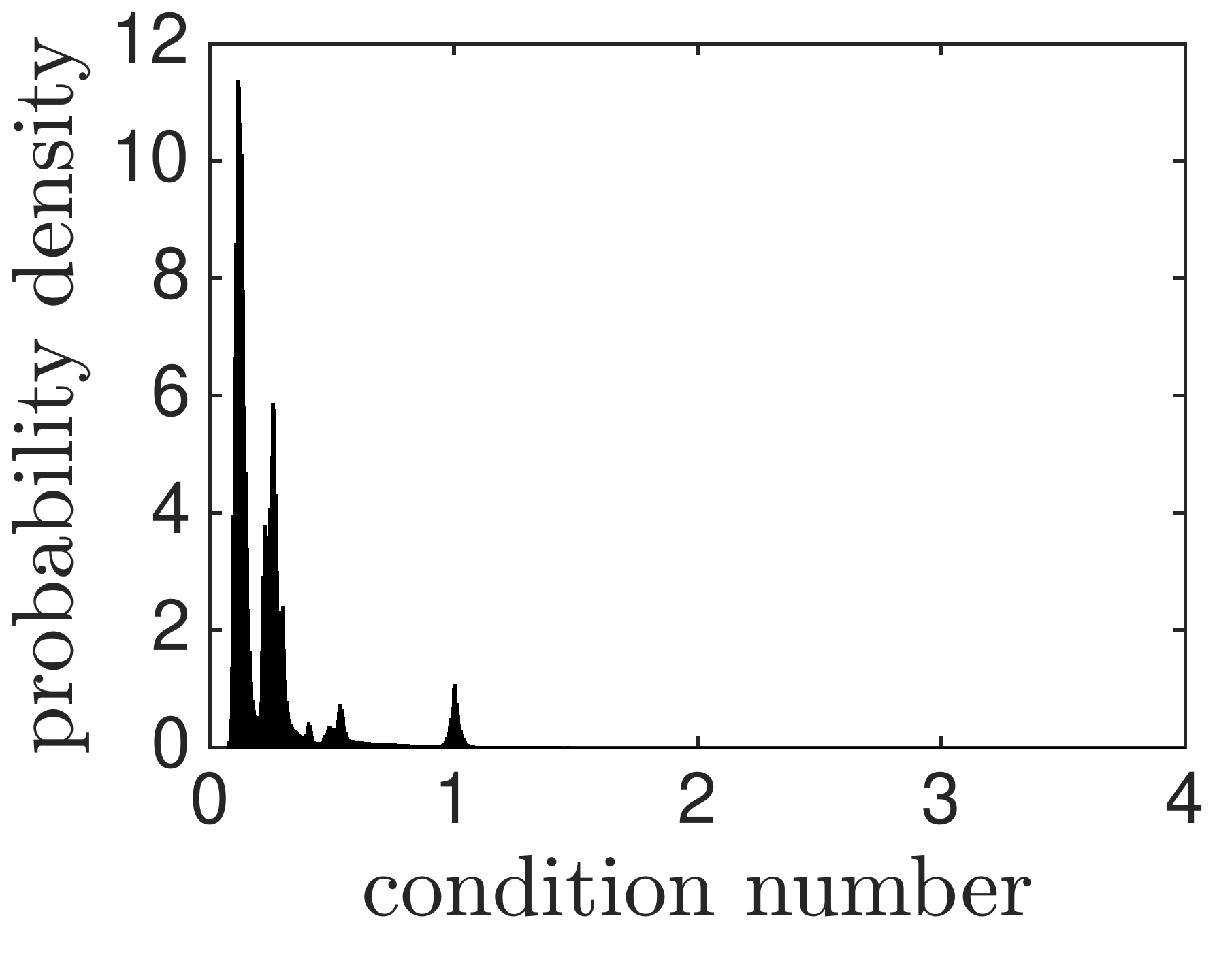}
\includegraphics[width=.32\textwidth]{\figdir/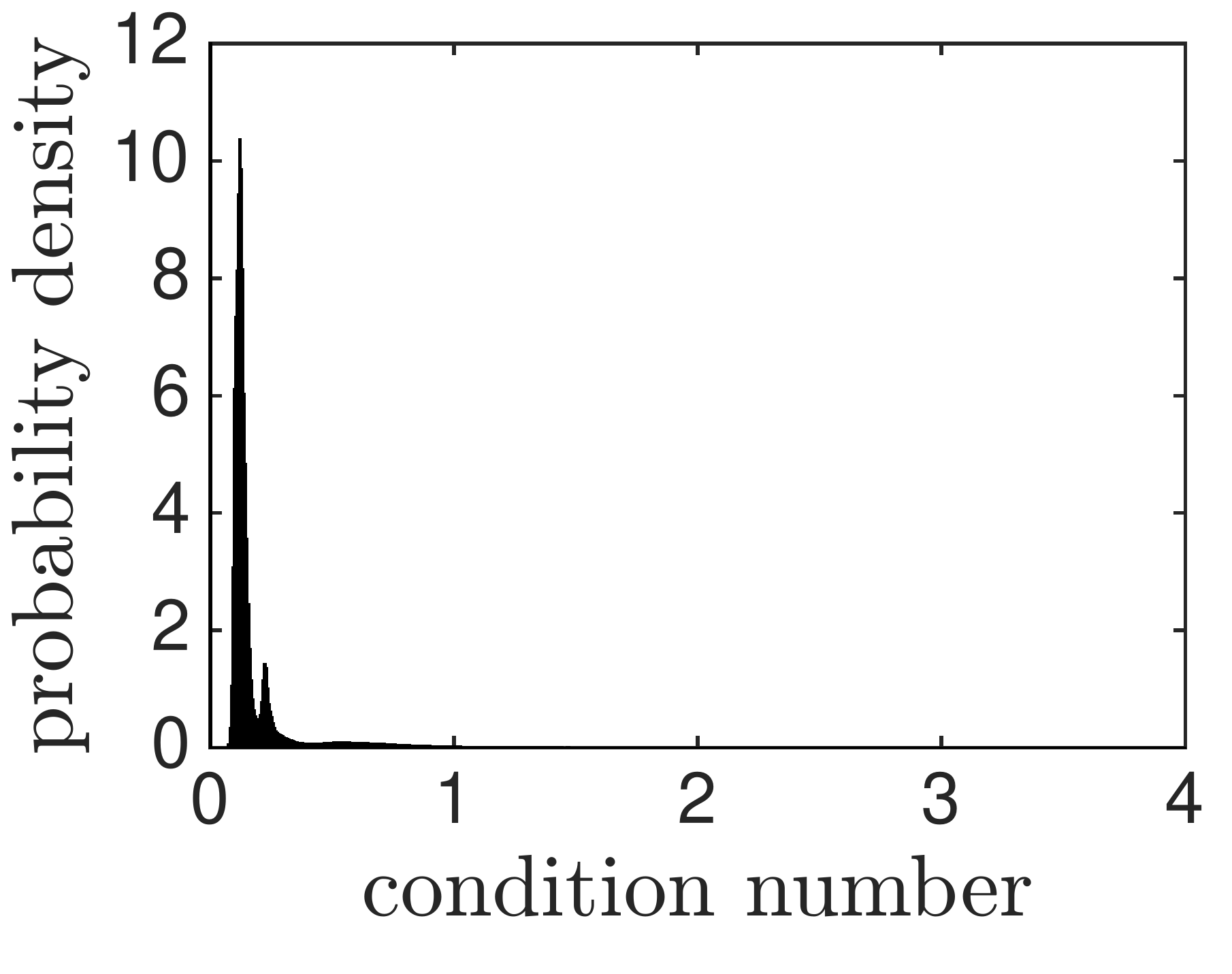}
\includegraphics[width=.32\textwidth]{\figdir/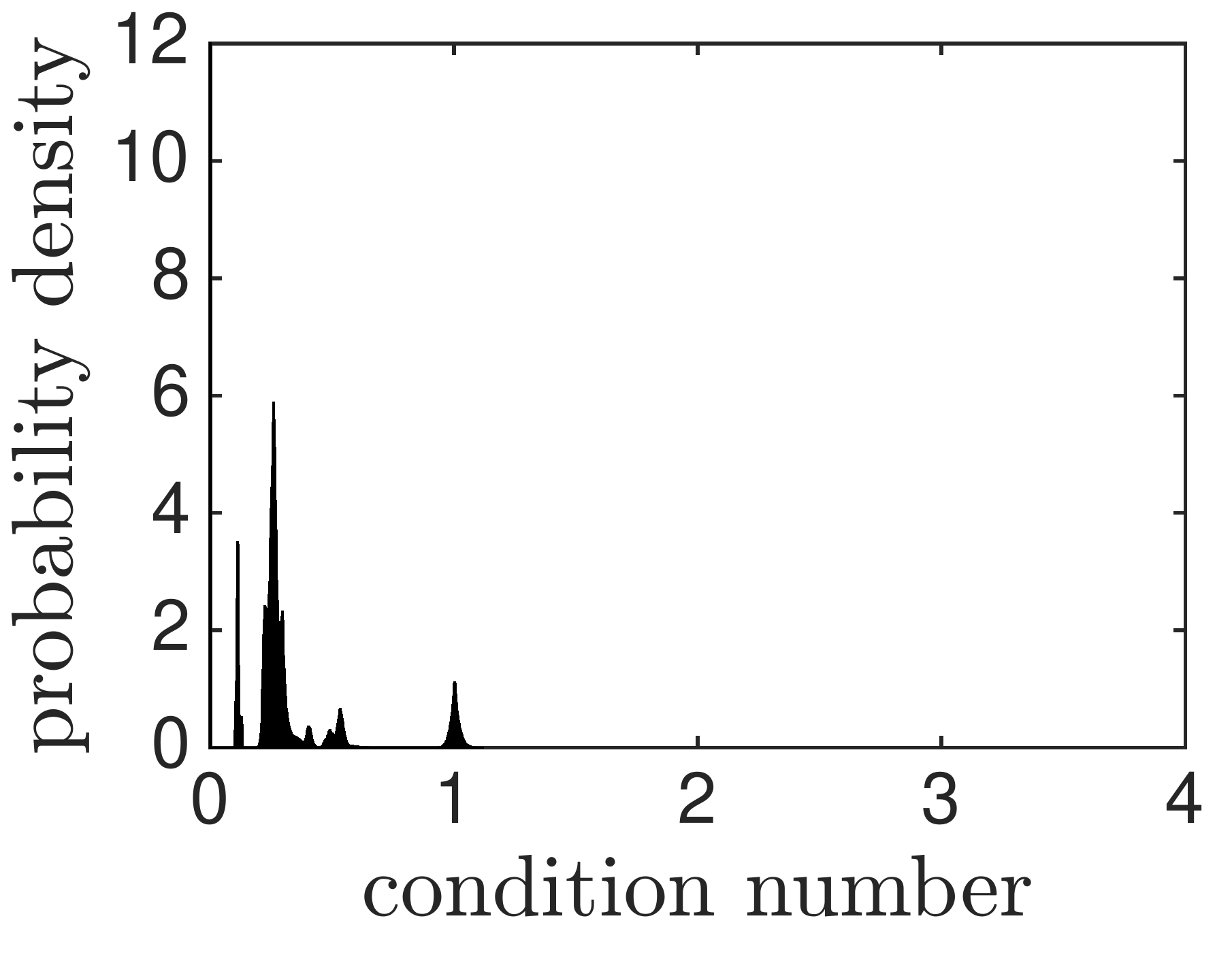}\\
\includegraphics[width=.32\textwidth]{\figdir/hist.pdf}
\includegraphics[width=.32\textwidth]{\figdir/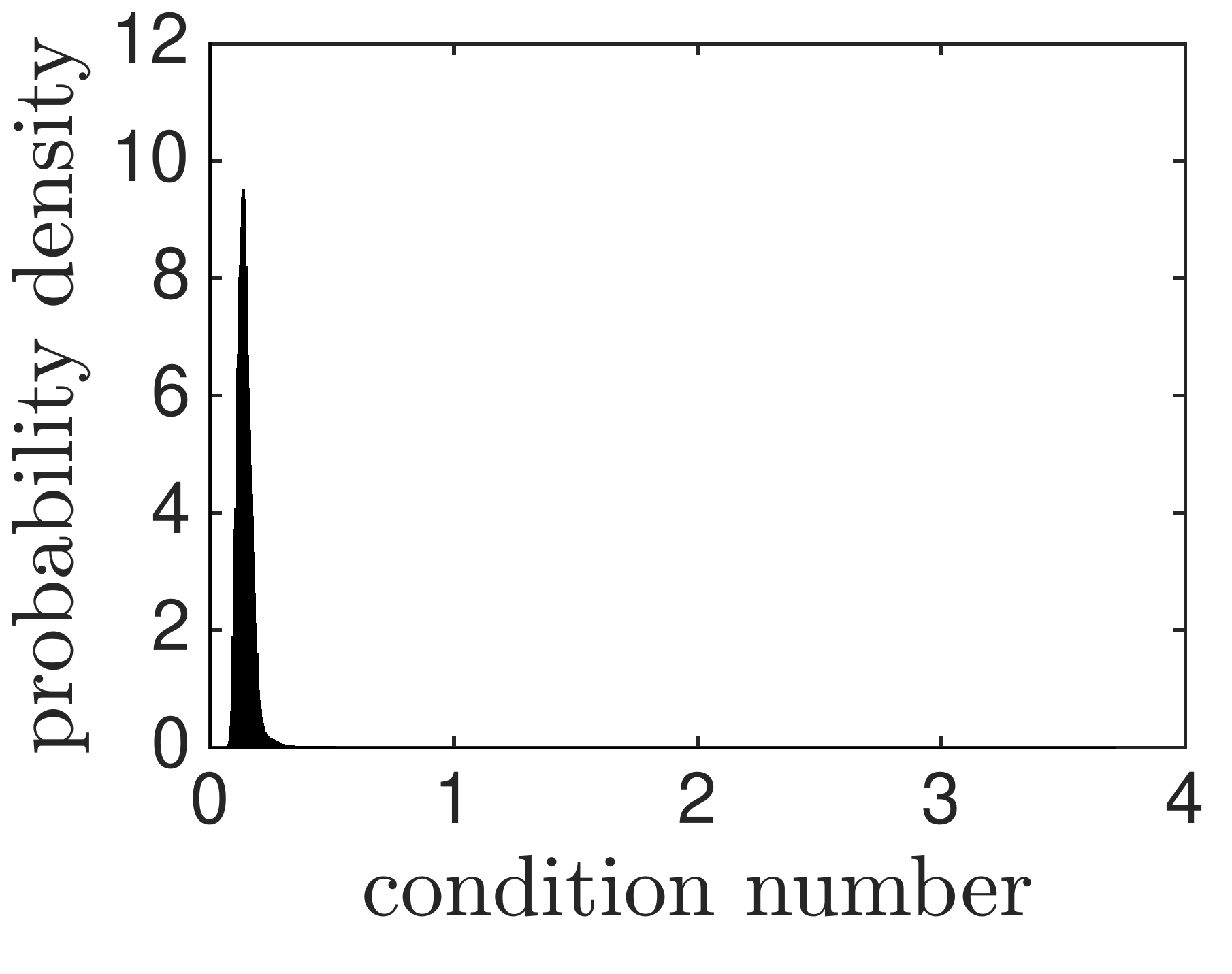}
\includegraphics[width=.32\textwidth]{\figdir/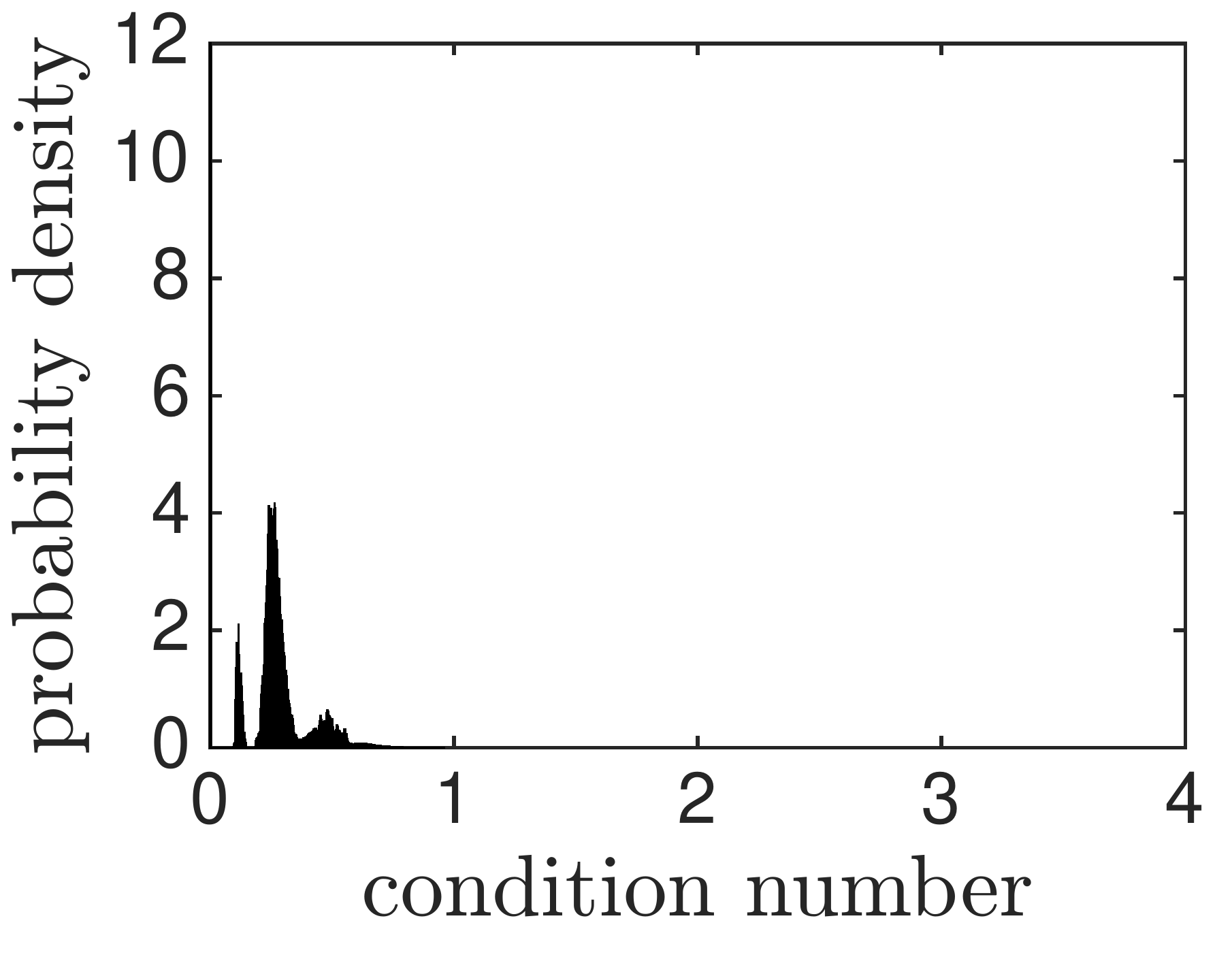}\\
\begin{picture}(0,0)(0,0)
{\small \put(-150, 215){$a\leq c\leq b-1$}\put(-20, 215){$a< c< b-1$}\put(100, 215){$c=a\;$ or $\;c=b-1$}}
\end{picture}
\caption{\small Condition $C1$: condition number $\frac{1}{\mu}$ of the network and its decomposition to two cases for random initialization and learned weights. \textbf{Top}: random initialization \textbf{Bottom}: learned weights. \textbf{Left}: distribution of all combinations of $a\leq c\leq b-1$. \textbf{Middle}: when $a<c<b-1$. \textbf{Right}: when $c=a$ or $c=b-1$.}
\label{fig:conditions_verify2}
\end{figure}

\begin{figure}[t]
\centering
\includegraphics[width=.32\textwidth]{\figdir/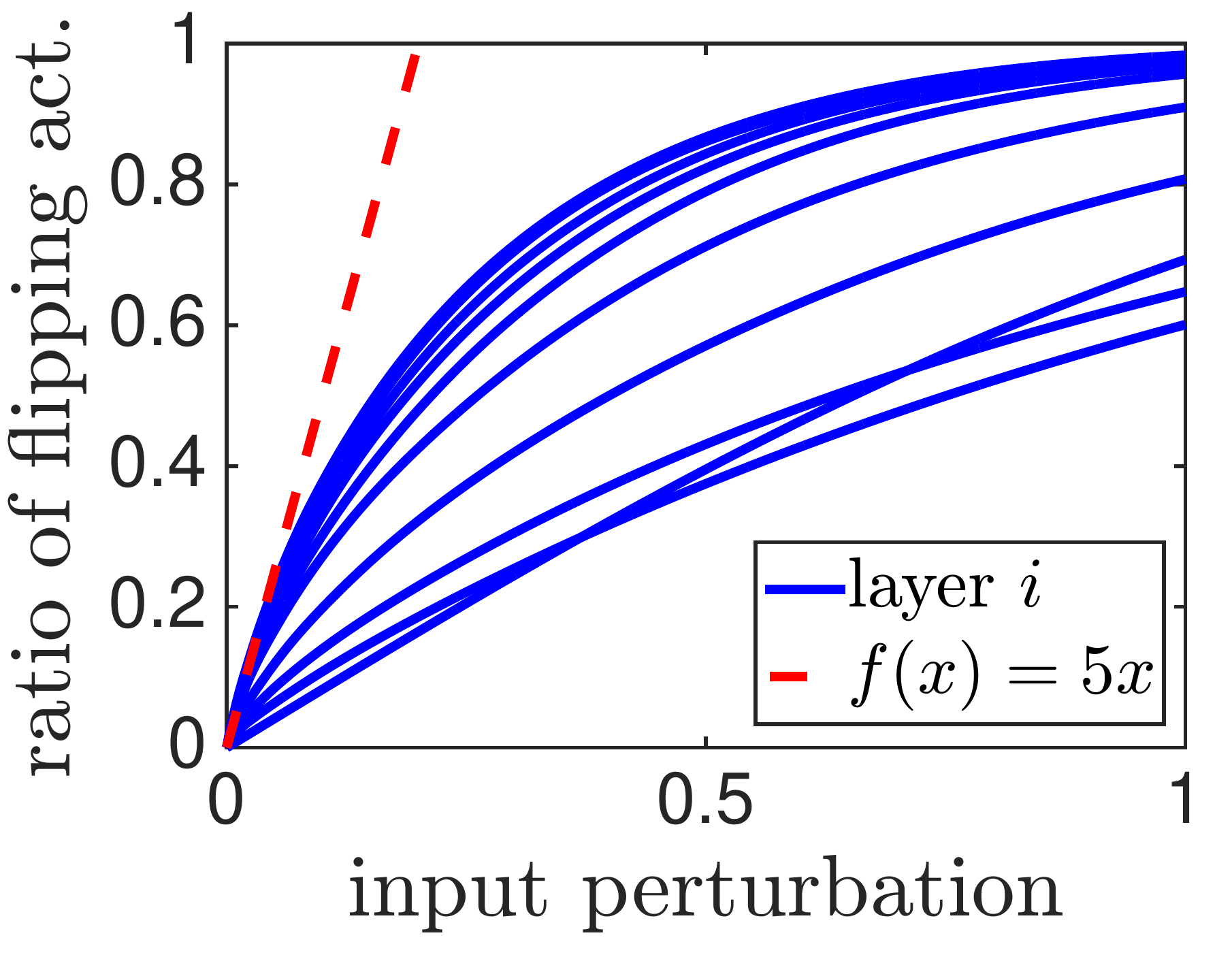}
\includegraphics[width=.32\textwidth]{\figdir/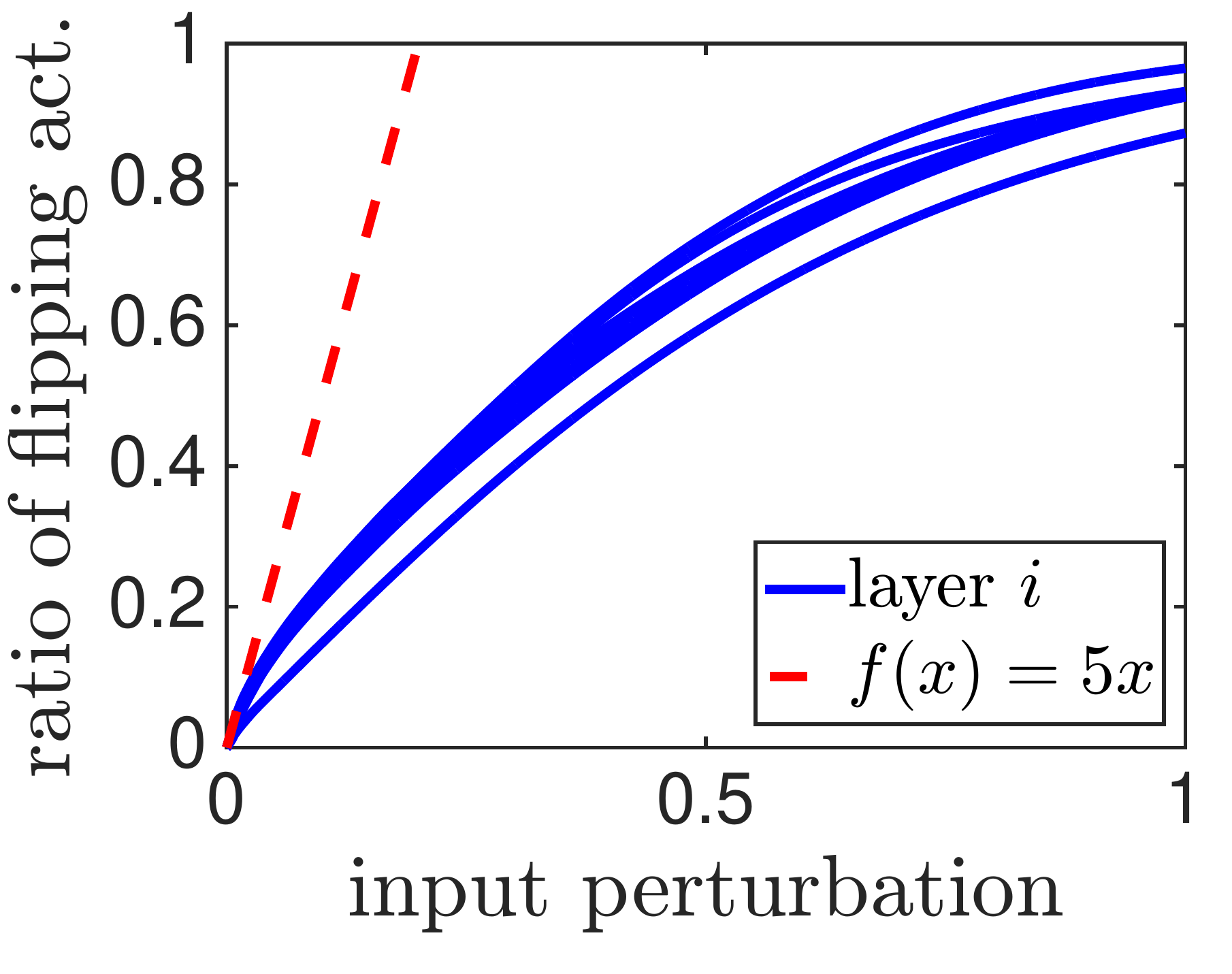}
\includegraphics[width=.33\textwidth]{\figdir/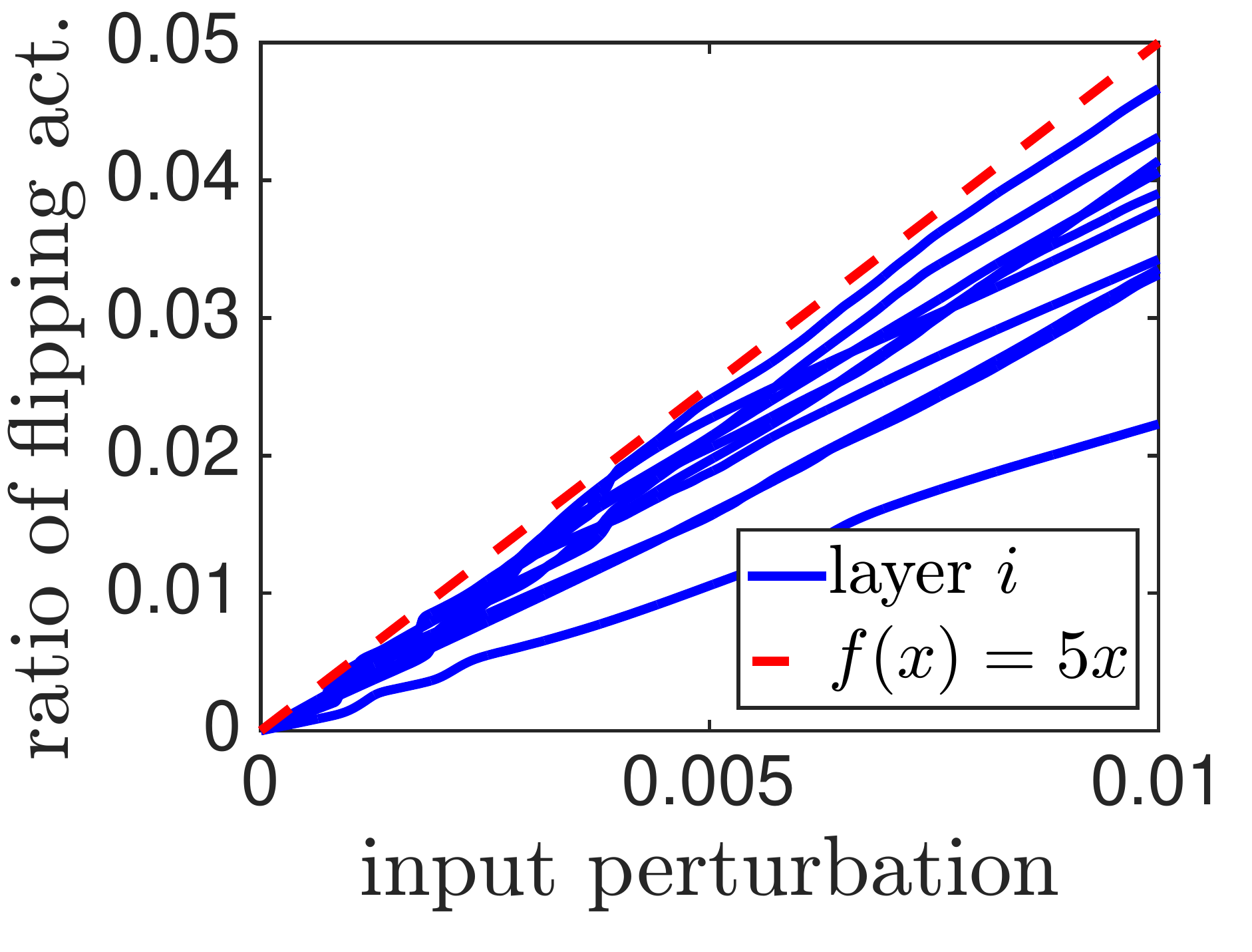}
\caption{\small Ratio of activations that flip based on the magnitude of perturbation. \textbf{Left}: random initialization. \textbf{Middle}: learned weights. \textbf{Right}: learned weights (zoomed in).}
\label{fig:conditions_verify3}
\end{figure}

\begin{figure}[t]
\centering
\includegraphics[width=.24\textwidth]{\figdir/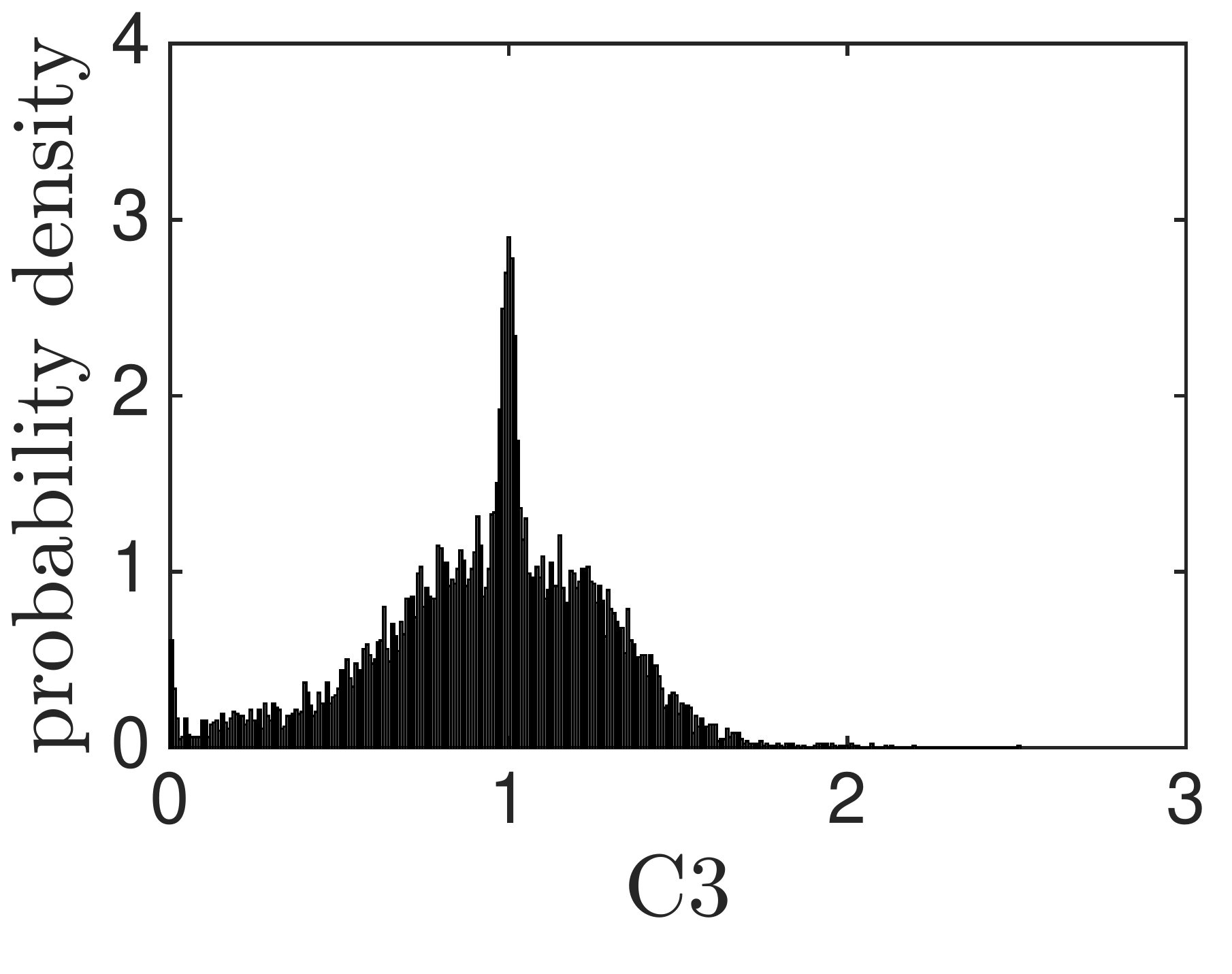}
\includegraphics[width=.24\textwidth]{\figdir/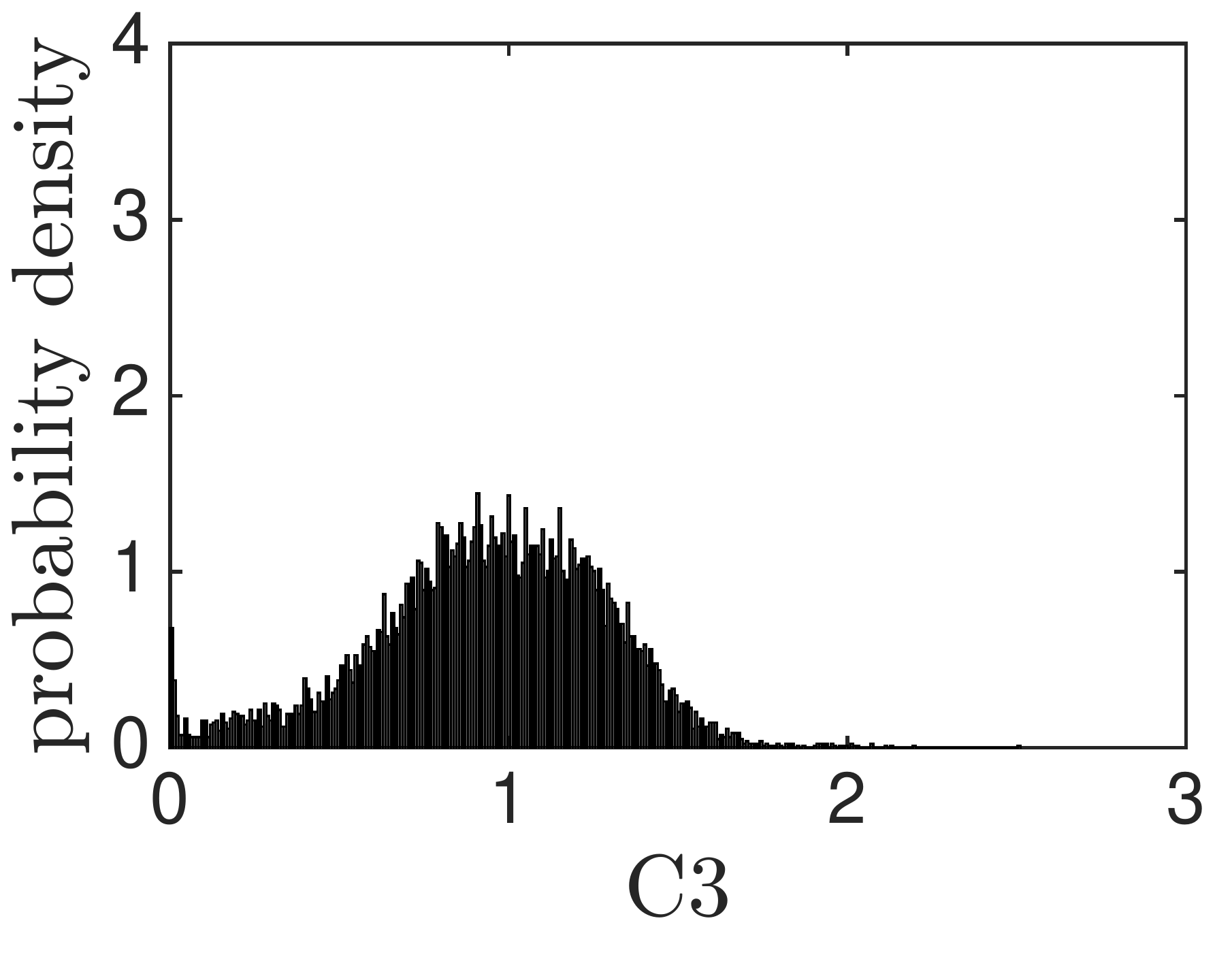}
\includegraphics[width=.25\textwidth]{\figdir/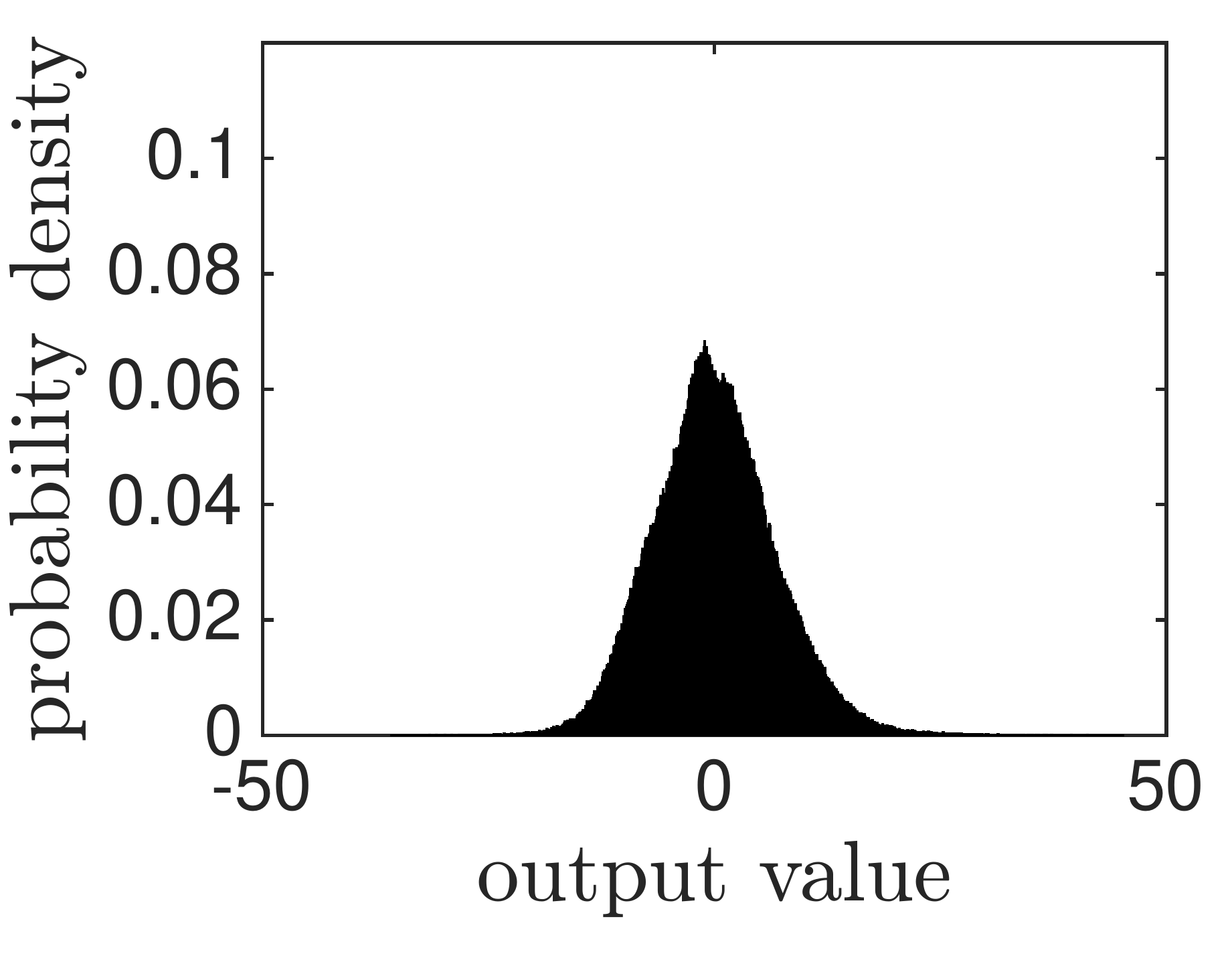}
\includegraphics[width=.25\textwidth]{\figdir/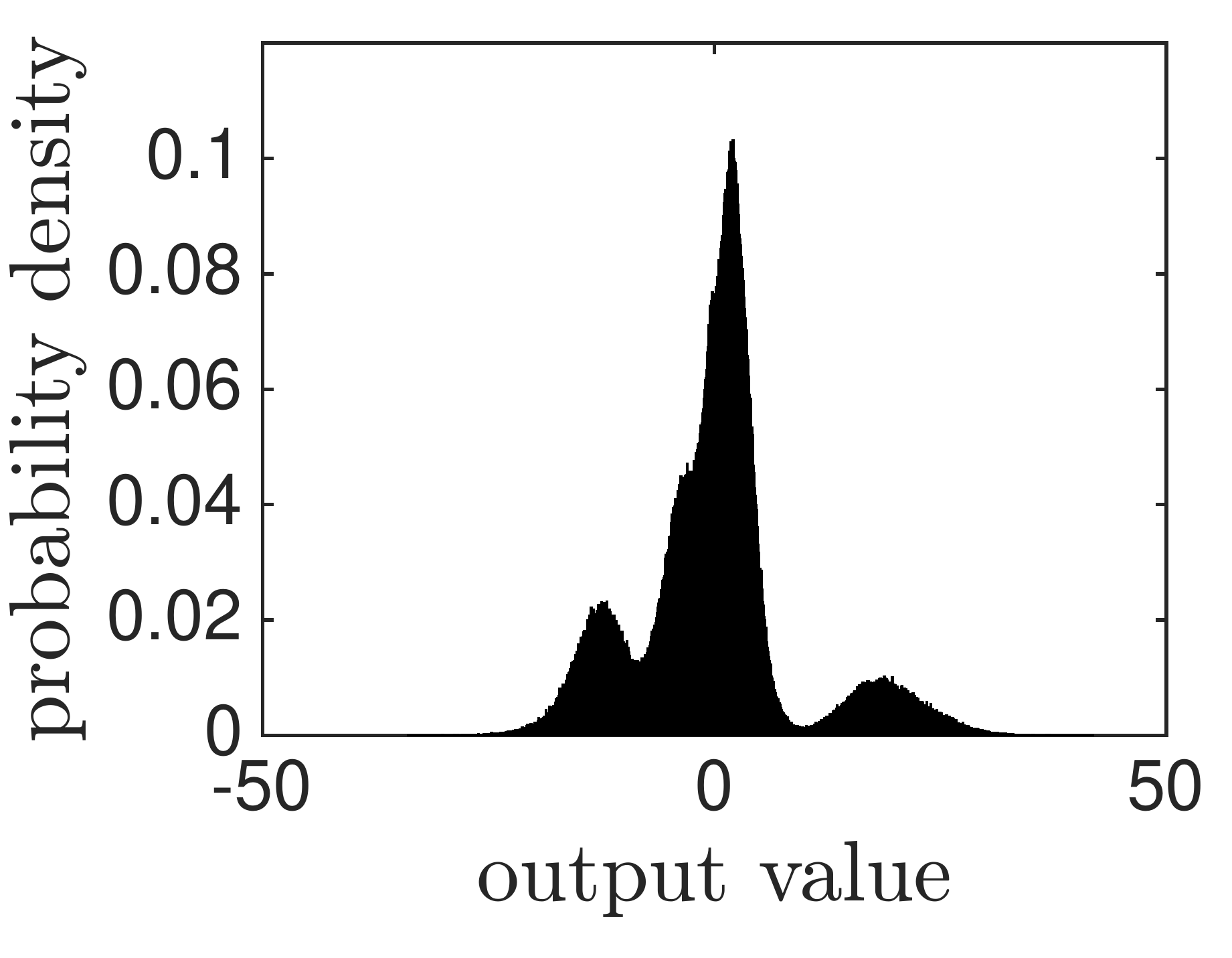}
\caption{\small From left to right: Condition $C3$ for random initialization and learned network, output values for random and learned network}
\label{fig:conditions_verify4}
\end{figure}

%

\end{document}